\documentclass[10pt,twocolumn,letterpaper]{article}

\usepackage[margin=1in]{geometry}
\usepackage[round]{natbib}
\usepackage{graphicx}
\usepackage{booktabs}
\usepackage{float}
\usepackage{amssymb}
\usepackage{amsmath}
\usepackage{amsthm}
\newtheorem{proposition}{Proposition}
\newtheorem{lemma}{Lemma}
\newtheorem{theorem}{Theorem}
\usepackage[hyphens]{url}
\usepackage[hidelinks]{hyperref}
\hypersetup{
  pdftitle={Target-Constrained Pooling for Trajectory Extrapolation},
  pdfauthor={Ning Wang, Claire Cardie, Sainyam Galhotra},
  pdfsubject={Trajectory extrapolation},
  pdfkeywords={target-constrained pooling, trajectory forecasting}
}

\newcommand{\methodaddblockbegin}{\begingroup}
\newcommand{\methodaddblockend}{\endgroup}

\title{Target-Constrained Pooling for Trajectory Extrapolation}
\author{Ning Wang\\Cornell University\\
  \and Claire Cardie\\Cornell University
  \and Sainyam Galhotra\\Cornell University}
\date{}

\begin{document}
\maketitle

\begin{abstract}
Trajectory extrapolation from a partial target trajectory is underdetermined, but completed related trajectories provide information about the unresolved continuation. We propose target-constrained pooling (TCP), which separates the finite-horizon endpoint from continuation shape and estimates that endpoint from two complementary source signals. Uniform pooling captures final response levels, while matched-increment pooling captures the remaining change after aligning source and target states. TCP selects representative sources, projects both endpoint estimates onto a target-consistent region, converts them into corrections of the free forecast, and controls their influence through strength calibration and degradation-aware weighting. Across \(25{,}245\) targets from five domains and both forecasting directions, TCP reduces mean RMSE by \(35.8\%\) to \(42.1\%\) for all ten target-only forecasters and improves all 100 forecaster, dataset, and direction comparisons. The two endpoint signals consistently outperform either alone, while representative selection remains within \(0.64\%\) to \(1.02\%\) of the full donor pool using \(12.7\times\) fewer donor inputs and \(138.8\times\) fewer matching distance computations.

\end{abstract}

\section{Introduction}
\label{sec:intro}

Extrapolation predicts the target variable beyond the range of observed
inputs. The difficulty is that the observed data may be consistent with
many different underlying functions. These functions can fit the
observations equally well within the observed range, yet produce very
different predictions outside it. Reliable extrapolation therefore
requires additional information or structural assumptions that constrain
the possible continuations
\citep{manski1990nonparametric,manski1997monotone}.
In many applications, such additional information is available through
related units whose trajectories are observed over a wider range than
the target. For example, in drug screening, dose--response curves from
other compounds or experimental units may extend beyond the dose range
observed for a new target
\citep{shi2012mixed,gutierrez2024multi}.
In model evaluation, learning curves from previous models or
configurations may extend to later stages of training than the partial
curve available for a new model
\citep{domhan2015speeding,klein2017learning}.
Similarly, in large-model training, measurements from smaller or
completed runs can provide information about behavior at compute levels
that have not yet been reached by a more expensive target run
\citep{kaplan2020scaling,hoffmann2022training,choshen2024hitchhiker}.
These settings motivate pooling information across related units to
extrapolate a partially observed target.

Rather than extrapolating from the observed target prefix alone,
information from more completely observed trajectories can be pooled
across related units to constrain the unobserved continuation. However,
trajectories can differ in scale, limiting behavior, and functional
shape, so pooling across poorly matched units can introduce bias.
Existing approaches therefore address cross-unit heterogeneity by
controlling \emph{which sources} contribute to the pooled estimate and
\emph{how strongly}. Empirical-Bayes methods adapt pooling to
population heterogeneity
\citep{efron1973stein,morris1983parametric}, while methods for
incorporating historical data regulate the influence of external
information under source--target conflict
\citep{ibrahim2000power,neuenschwander2010summarizing,
schmidli2014robust}. Transfer-learning work similarly studies how
source--target mismatch affects the value of transfer
\citep{ben2010theory,hanneke2019value,zhang2022survey}.
For extrapolation, these approaches leave a more basic
question unresolved: \emph{how should information from related trajectories
be pooled to improve trajectory forecasting?}

Across the five datasets, trajectories from the same cohort have more similar
continuation shapes than trajectories from different cohorts, while their
endpoint levels remain heterogeneous. After normalizing each continuation to
start at \(0\) at the observed boundary and end at \(1\) at the finite
forecasting horizon, the median RMS shape discrepancy is \(23.7\%\) lower
within cohorts, with reductions of \(12.2\%\) to \(31.4\%\) across datasets.
At the same time, the median within-cohort endpoint gap is \(24.8\%\) of the
observed-prefix response range. This separates two unresolved quantities in
extrapolation, the total remaining response change and how that change evolves
over the continuation. We write
\(y(d)=b+(V_T-b)h(r)\), where \(b\) is the observed boundary response,
\(V_T\) is the finite-horizon endpoint, and \(h(r)\) describes the
continuation shape. This observation motivates target-constrained pooling
(TCP). TCP transfers the unresolved endpoint from related trajectories using
two complementary signals. Uniform pooling captures their final response
levels, while matched-increment pooling captures the remaining change after
aligning source and target states. TCP selects a representative source set,
projects both endpoint estimates onto the target-consistent region, and
converts each endpoint into a correction of the free forecast through a
boundary-anchored residual bridge and bounded isotonic projection. Separate
strengths control how far the two corrections move the free forecast, and
degradation-aware weighting combines them under an accuracy constraint.

Across \(25{,}245\) targets from five domains and both forecasting directions,
TCP reduces mean RMSE by \(35.8\%\) to \(42.1\%\) for all ten target-only
forecasters and improves all 100 forecaster, dataset, and direction
comparisons. TCP also remains effective when related trajectories already
enter the base fit, with gains of \(6.5\%\) to \(36.1\%\) using selected
sources and \(1.6\%\) to \(36.1\%\) using all permitted sources. The
experiments trace these gains to the endpoint construction. Alternative-cohort
endpoints already improve forecasting by \(14.5\%\) to \(24.4\%\), and
own-cohort endpoints add another \(14.7\%\) to \(17.5\%\). Combining uniform
and matched-increment estimates consistently improves over either signal alone
and over random donor endpoints or donor-level averaging. The representative
source set remains within \(0.64\%\) to \(1.02\%\) of using every permitted
donor while using \(12.7\times\) fewer donor inputs and \(138.8\times\) fewer
matching distances. Target-consistent projection gives its largest gains when
the declared direction holds and projection is active, improving RMSE by
\(7.3\%\) to \(19.3\%\). Strength calibration then reduces total degradation
by \(34.4\%\) to \(40.4\%\), and degradation-aware weighting further reduces
degradation by \(8.7\%\) to \(11.2\%\) relative to RMSE-only weighting.

\section{Related Work}

\paragraph{Trajectory extrapolation.}
Prior work extrapolates partial trajectories using parametric models, Bayesian
predictors, or regularities learned across related curves
\citep{domhan2015speeding,klein2017learning}. Recent methods include latent
Kronecker Gaussian processes for learning-curve forecasting
\citep{lin2024scaling,lin2025scalable} and NSL-PFN for neural scaling-law
extrapolation \citep{lee2025nslpfn}. Scaling-law methods model performance
across compute or data scales
\citep{kaplan2020scaling,hoffmann2022training,choshen2024hitchhiker}, while
dose-response models share structure across related curves
\citep{shi2012mixed,gutierrez2024multi}. TCP instead separates the unresolved
finite-horizon endpoint from continuation shape and transfers source
information specifically through that endpoint.

\paragraph{Pooling across related units.}
Statistical pooling controls how information is shared across heterogeneous
units. Empirical-Bayes methods shrink unit estimates toward population
information \citep{efron1973stein,morris1983parametric}, while historical
borrowing and transfer-learning methods regulate source influence under
population or source-target mismatch
\citep{ibrahim2000power,neuenschwander2010summarizing,
schmidli2014robust,ben2010theory,hanneke2019value,zhang2022survey}.
TCP applies this idea at the endpoint level, using complementary final-level
and matched-change signals and selecting representative sources before
pooling.

\paragraph{Structural constraints and calibrated transfer.}
Bounds and monotonicity restrict extrapolations consistent with observed data
\citep{manski1990nonparametric,manski1997monotone}. TCP combines these
target-side constraints with source-side endpoint information. The observed
boundary and response direction constrain transferred endpoints, endpoint-guided
corrections incorporate them into the free forecast, and calibration controls
their strength and combination.
\section{Problem Setup}
\label{sec:setup}

We consider trajectory forecasting with one partially observed target
trajectory and a set of related source trajectories. Inputs are ordered
toward the forecasting horizon. For reverse evaluation, we reverse the
physical input order before applying the same notation. Let \(y(d)\) denote the target response at ordered input \(d\). We observe
the prefix \(\mathcal H=\{(d_t,y(d_t))\}_{t=1}^{m}\), where
\(d_1<\cdots<d_m=\tau\). The last observed input \(\tau\) is the
observation boundary, and \(T>\tau\) is the finite forecasting horizon.
The responses \(y(d)\) for \(\tau<d\leq T\) form the unobserved target
continuation. We also observe a set of related source trajectories
\(\mathcal S=\{y_j\}_{j=1}^{n}\), each observed through the same
forecasting horizon \(T\). The available information is the target prefix
\(\mathcal H\) and the source trajectories \(\mathcal S\), while the
target continuation remains unobserved. The forecasting task is to estimate \(y(d)\) for every
\(\tau<d\leq T\) using \((\mathcal H,\mathcal S)\). The central problem
is to determine what information from the related trajectories should be
pooled and how that information should be incorporated into the target
forecast.

\section{Method}
\label{sec:method}

\methodaddblockbegin
\begin{figure*}[!t]
\centering
\includegraphics[width=0.90\textwidth]{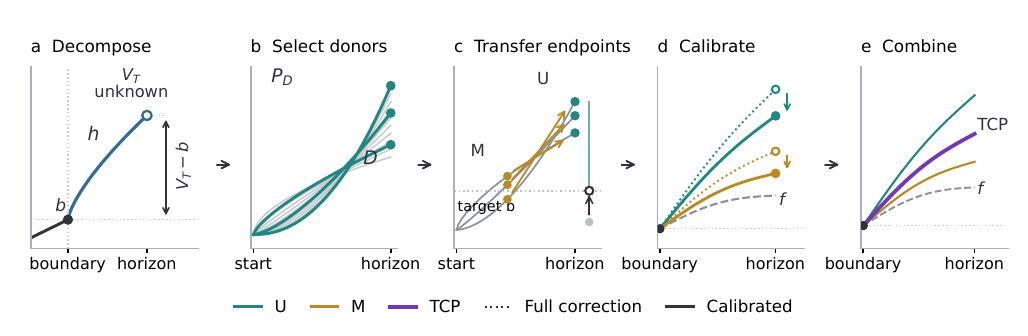}
\caption{TCP workflow.}
\label{fig:mechanism}
\end{figure*}
\methodaddblockend

We first decompose the continuation into its observed boundary response,
remaining response change, and continuation shape
(Section~\ref{sec:decomp}). TCP then estimates two finite-horizon endpoints
from related trajectories and projects them onto the target-consistent region
(Section~\ref{sec:bcp}), converts each endpoint into a forecast-level
correction (Section~\ref{sec:common-adapter}), calibrates the correction
strengths (Section~\ref{sec:calibration}), and combines the calibrated
corrections with degradation-aware weighting
(Section~\ref{sec:weighting}).

\subsection{Continuation Decomposition}
\label{sec:decomp}

Pooling source trajectories directly mixes the remaining response change
with the continuation shape. We separate these quantities so that related
trajectories inform the unresolved response change while the target
forecaster determines the continuation shape. Let \(b=y(\tau)\) be the observed boundary response,
\(V_T=y(T)\) the finite-horizon endpoint, and \(r=d-\tau\) the distance
from the boundary. For \(V_T\neq b\), we write
\(y(d)=b+(V_T-b)h(r)\), where \(V_T-b\) is the remaining response change
over the forecasting horizon and \(h(r)\) is its normalized continuation
shape, with \(h(0)=0\) and \(h(T-\tau)=1\). Specifying \(V_T\) determines the total remaining response change,
while the continuation shape \(h\) determines how that change evolves
from \(\tau\) to \(T\). Although the magnitude of \(V_T-b\) remains
unresolved, its direction relative to the observed boundary can be
constrained. TCP then uses a response direction \(\sigma\in\{-1,+1\}\),
with \(+1\) indicating an increase from the observed boundary and
\(-1\) a decrease. This gives the endpoint constraint
\(\sigma(V_T-b)\geq0\) and defines the target-consistent region
\(\mathcal C_b=\{v\in\mathbb R\mid\sigma(v-b)\geq0\}\). For an
endpoint estimate \(\widehat V\), TCP projects it onto this region as
\(\Pi_{\mathcal C_b}(\widehat V)
=b+\sigma\max\{\sigma(\widehat V-b),0\}\).

\begin{lemma}[Projection does not increase endpoint error]
\label{lem:projection}
Suppose the true finite-horizon endpoint satisfies
\(V_T\in\mathcal C_b\). Then, for any endpoint estimate
\(\widehat V\),
\(
\left|\Pi_{\mathcal C_b}(\widehat V)-V_T\right|
\leq
\left|\widehat V-V_T\right|.
\)
Consequently,
\(
\left(\Pi_{\mathcal C_b}(\widehat V)-V_T\right)^2
\leq
\left(\widehat V-V_T\right)^2.
\)
\end{lemma}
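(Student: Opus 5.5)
The plan is a direct case analysis on the sign of \(\sigma(\widehat V-b)\), after reducing to a one-dimensional statement about projecting onto a half-line. Write \(u=\sigma(\widehat V-b)\) and \(w=\sigma(V_T-b)\). The hypothesis \(V_T\in\mathcal C_b\) says exactly that \(w\geq0\), and the projection formula gives \(\sigma\bigl(\Pi_{\mathcal C_b}(\widehat V)-b\bigr)=\max\{u,0\}\). Since \(\sigma\in\{-1,+1\}\), multiplication by \(\sigma\) preserves absolute values. Hence
\(\left|\Pi_{\mathcal C_b}(\widehat V)-V_T\right|=\left|\max\{u,0\}-w\right|\) and \(\left|\widehat V-V_T\right|=\left|u-w\right|\).
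The lemma therefore reduces to showing \(\left|\max\{u,0\}-w\right|\leq|u-w|\) for all real \(u\) and all \(w\geq0\).

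I will then split into two cases.
\begin{itemize}
\item If \(u\geq0\), the projection is inactive: \(\max\{u,0\}=u\) and the two sides are equal.
\item If \(u<0\), the left side is \(|0-w|=w\). The right side is \(|u-w|=w-u=w+|u|\), because \(u<0\leq w\). So \(w\leq w+|u|\), with strict inequality.
\end{itemize}
This proves the absolute-value inequality. The squared-error statement follows immediately, since \(x\mapsto x^2\) is monotone on \([0,\infty)\) and both sides are nonnegative.

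No step is a real obstacle. The only point needing care is the sign bookkeeping in the reduction, namely that \(|\sigma x|=|x|\) and that the projection formula really does translate to \(\max\{u,0\}\) in the \(\sigma\)-rotated coordinate. As a cross-check, the same conclusion follows from the fact that Euclidean projection onto a closed convex set is nonexpansive, \(\left|\Pi(\widehat V)-\Pi(V_T)\right|\leq\left|\widehat V-V_T\right|\), together with \(\Pi_{\mathcal C_b}(V_T)=V_T\) for \(V_T\in\mathcal C_b\). To use that argument I would first need to verify that \(\mathcal C_b\) is a closed half-line and that the given formula is its nearest-point map. The elementary case split avoids this and is the one I would write out.
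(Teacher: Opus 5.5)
Your proof is correct and follows the same route as the paper's: split on whether the projection is active, note that it is the identity when \(\widehat V\in\mathcal C_b\), and otherwise use \(\Pi_{\mathcal C_b}(\widehat V)=b\) with \(\widehat V\) and \(V_T\) on opposite sides of \(b\), so that \(|\widehat V-V_T|=|\widehat V-b|+|V_T-b|\geq|V_T-b|\). Your \(\sigma\)-rotated coordinates \(u,w\) are a tidier bookkeeping of that same argument, and the squared-error step matches the paper's.
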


Proof in Appendix~\ref{app:proof-projection}.

The decomposition determines the roles of the target, source trajectories,
and free forecast in TCP. Related trajectories estimate the unresolved
finite-horizon endpoint \(V_T\), while the target boundary and response
direction constrain that estimate through \(\mathcal C_b\). The free forecast
provides the reference from which TCP constructs its forecast-level correction.
Lemma~\ref{lem:projection} establishes that projection onto
\(\mathcal C_b\) cannot increase endpoint error when
\(V_T\in\mathcal C_b\).

\subsection{Endpoint Estimation from Source Levels and Matched Changes}
\label{sec:bcp}

Section~\ref{sec:decomp} separates the unresolved response magnitude
\(V_T-b\) from the continuation shape, making the finite-horizon endpoint
the quantity transferred from related trajectories. The free forecast
provides the reference continuation, while related trajectories inform the
endpoint through two signals. Their final response levels indicate the
magnitude reached at the forecasting horizon, and their subsequent changes
after target-like recent observations indicate how much response remains.
TCP uses both signals to estimate the endpoint. Let \(P_D\subseteq\mathcal S\) denote the permitted source trajectories.
Before estimating the endpoint, TCP compresses \(P_D\) into a representative
set \(D\) to reduce the source and matching work while preserving the
structure of the full pool. Gaussian-kernel herding orders the sources by
representativeness. Let \(D_m\) contain its first \(m\) representatives and
\(d_m=\operatorname{MMD}(D_m,P_D)\), with \(N=|P_D|\) and \(d_N=0\).
TCP selects the smallest \(m\) maximizing
\(1-\frac{m-1}{N-1}-\frac{d_m}{d_1}\), and sets \(D=D_m\).
The criterion balances the size of the selected set against its discrepancy
from the full source pool. Appendix~\ref{app:current-implementation} gives
the selection details. Uniform
endpoint pooling uses the direct horizon information,
\(U_{\mathrm{raw}}=|D|^{-1}\sum_{k\in D}y_k(T)\).
Matched-increment pooling instead finds, for each source \(k\), a stage
\(p_k\) resembling the target's recent observations and transfers the response
remaining after that stage. With normalized similarity weights
\(a_k\geq0\), \(\sum_{k\in D}a_k=1\), and
\(\Delta_k=y_k(T)-y_k(p_k)\), it gives
\(M_{\mathrm{raw}}=b+\sum_{k\in D}a_k\Delta_k\).
Thus, \(U_{\mathrm{raw}}\) uses absolute endpoint levels, whereas
\(M_{\mathrm{raw}}\) aligns each source to the target's current state and
transfers only the subsequent change. These estimators have complementary error sources. Uniform pooling uses the
endpoint observations directly but retains differences in source response
level. Matched-increment pooling removes level differences shared by the
matched stage and endpoint, but its estimate also depends on the matched stage
and therefore introduces matching and stage-level error. The following
proposition makes this trade-off explicit.

\begin{proposition}[Complementary endpoint information]
\label{prop:endpoint-complementarity}
Suppose the matched source responses satisfy
\(y_k(T)=V_T+\beta_k+\varepsilon_{k,T}\) and
\(y_k(p_k)=b+\beta_k+\varepsilon_{k,p}\), where \(\beta_k\) is a
source-specific level offset. Let \(u_k=1/|D|\). Then
\(U_{\mathrm{raw}}-V_T
=\sum_{k\in D}u_k\beta_k
+\sum_{k\in D}u_k\varepsilon_{k,T}\),
while
\(M_{\mathrm{raw}}-V_T
=\sum_{k\in D}a_k(\varepsilon_{k,T}-\varepsilon_{k,p})\).
Hence uniform pooling retains source-level offsets but uses the horizon
responses directly, whereas matched-increment pooling cancels the shared
offsets at the cost of additional matched-stage error.
\end{proposition}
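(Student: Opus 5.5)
The argument is a direct substitution into the two estimator definitions, with the normalization of the weights doing all the work. In both cases I would rewrite \(V_T\) as a weighted sum using \(\sum_{k\in D}u_k=1\) (since \(u_k=1/|D|\)) and \(\sum_{k\in D}a_k=1\).

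\emph{Uniform pooling.} Start from \(U_{\mathrm{raw}}=\sum_{k\in D}u_k y_k(T)\) and substitute the model \(y_k(T)=V_T+\beta_k+\varepsilon_{k,T}\). This gives \(U_{\mathrm{raw}}=V_T\sum_k u_k+\sum_k u_k\beta_k+\sum_k u_k\varepsilon_{k,T}\). Because \(\sum_k u_k=1\), subtracting \(V_T\) leaves exactly the stated expression.

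\emph{Matched-increment pooling.} First compute each increment: \(\Delta_k=y_k(T)-y_k(p_k)=(V_T-b)+(\varepsilon_{k,T}-\varepsilon_{k,p})\). The offset \(\beta_k\) appears in both \(y_k(T)\) and \(y_k(p_k)\), so it cancels identically. This cancellation is the key step and the one to state carefully. It needs the same \(\beta_k\) at the matched stage \(p_k\) and at the horizon \(T\), which is exactly what the hypothesis asserts for the matched responses. Next, \(M_{\mathrm{raw}}=b+\sum_k a_k\Delta_k=b+(V_T-b)\sum_k a_k+\sum_k a_k(\varepsilon_{k,T}-\varepsilon_{k,p})\). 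Using \(\sum_k a_k=1\), the first two terms collapse to \(V_T\), which yields the second identity. Nonnegativity of the \(a_k\) is not needed for the identity; only the normalization is.

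\emph{Interpretive ``hence'' clause.} This follows by comparing the two error decompositions term by term. The uniform error contains the averaged offset \(\sum_k u_k\beta_k\), which does not vanish unless the offsets average to zero, together with a single horizon noise term. The matched error contains no \(\beta\) term but carries the extra stage noise \(-\sum_k a_k\varepsilon_{k,p}\). If one wants a quantitative version, assuming zero-mean noise independent across \(k\) and between stage and horizon would show that the matched variance gains the additional term \(\sum_k a_k^2\operatorname{Var}(\varepsilon_{k,p})\). I would present that only as a remark, since the proposition's statement is purely algebraic. There is no real obstacle beyond keeping the two weight normalizations explicit.
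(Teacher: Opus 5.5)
Your proposal is correct and matches the paper's proof essentially step for step. You substitute the model into \(U_{\mathrm{raw}}\), observe that \(\beta_k\) cancels inside \(\Delta_k\), and use \(\sum_{k\in D}a_k=1\) to collapse \(b+(V_T-b)\sum_k a_k\) to \(V_T\); your side remarks, that only the normalization of the \(a_k\) is used and the optional variance comparison, are extras beyond what the paper states.
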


Proof in Appendix~\ref{app:proof-endpoint-complementarity}.

Proposition~\ref{prop:endpoint-complementarity} shows that the two signals
capture different sources of endpoint information. When absolute source levels are informative,
\(U_{\mathrm{raw}}\) uses the endpoint observations without introducing
matched-stage error. When source levels differ but their remaining changes
transfer across aligned stages, \(M_{\mathrm{raw}}\) removes those level
offsets. TCP therefore retains both endpoint estimates and calibrates their
contributions separately. The experiments test this complementarity by
comparing the combined predictor with its U-only and M-only variants. TCP next projects both raw endpoint estimates onto the target-consistent
region from Section~\ref{sec:decomp}, giving
\(\widetilde V_U=\Pi_{\mathcal C_b}(U_{\mathrm{raw}})\) and
\(\widetilde V_M=\Pi_{\mathcal C_b}(M_{\mathrm{raw}})\).
Lemma~\ref{lem:projection} establishes that this projection does not increase
endpoint error when \(V_T\in\mathcal C_b\).

\subsection{Endpoint-Guided Forecast Correction}
\label{sec:common-adapter}

Given the free forecast \(f\), observed boundary \(b\), and a projected
endpoint \(V\), TCP converts the endpoint difference into a correction over
the full continuation. Let \(f_j=f(d_{m+j})\), \(j=1,\ldots,H\), denote
the free forecast at the \(H\) future forecast positions and set \(f_0=b\).
TCP first distributes the endpoint difference \(V-f_H\) across the
continuation as
\(a_0=b\) and
\(a_j=f_j+\frac{j}{H}(V-f_H)\) for \(j=1,\ldots,H\).
The resulting residuals \(\delta_j=a_j-f_j\) satisfy
\(\delta_0=0\) and \(\delta_H=V-f_H\), and uniquely minimize
\(\sum_{j=1}^{H}(\delta_j-\delta_{j-1})^2\) under these endpoint
conditions. This gives the smooth change from the free forecast to the
supplied endpoint over the observation sequence. TCP then enforces consistency with the observed boundary and endpoint. Define
\(\mathcal A_{b,V}\) as the set of sequences \(q\) satisfying
\(q_0=b\), \(q_H=V\),
\(\min(b,V)\leq q_j\leq\max(b,V)\), and
\((V-b)(q_j-q_{j-1})\geq0\) for \(j=1,\ldots,H\). The corrected continuation is
\(\mathcal Q(f,b,V)
=\operatorname*{arg\,min}_{q\in\mathcal A_{b,V}}
\sum_{j=1}^{H}(q_j-a_j)^2\).
Since \(\mathcal A_{b,V}\) is closed and convex, this projection is unique.
The resulting correction preserves the observed boundary, reaches the
supplied endpoint, follows its response direction, and remains as close as
possible to the residual bridge.

\subsection{Calibrated Endpoint Corrections}
\label{sec:calibration}

The endpoint-guided correction is defined relative to the free forecast, so
its effect depends on the forecast being corrected. Applying the full
correction can also move the forecast farther than needed. For
\(B\in\{U,M\}\), TCP first constructs
\(q_B=\mathcal Q(f,b,\widetilde V_B)\) and then forms
\(c_B=f+\alpha_B(q_B-f)\), where \(\alpha_B\in[0,1]\) controls the
correction strength. The choice \(\alpha_B=0\) retains the free forecast,
while \(\alpha_B=1\) applies the full endpoint-guided correction. TCP
calibrates \(\alpha_U\) and \(\alpha_M\) separately because the two endpoints
are estimated from different source signals. Let \(C\) denote the calibration trajectories, treated as pseudo-targets
under the same prefix and continuation split. For trajectory \(i\), let
\(f_i\), \(q_{B,i}\), and \(y_i\) denote the free forecast,
endpoint-guided forecast, and held-out continuation, and let \(e_i(z)\)
denote a convex and continuous trajectory-level error. For branch \(B\),
define
\(A_B(\alpha)=|C|^{-1}\sum_{i\in C}
e_i(f_i+\alpha(q_{B,i}-f_i))\).
TCP selects
\(\alpha_B\in\operatorname*{arg\,min}_{\alpha\in[0,1]}A_B(\alpha)\),
with ties resolved toward the smaller strength. The following lemma
characterizes this calibration step.

\begin{lemma}[Strength calibration]
\label{lem:strength-calibration}
For each branch \(B\), \(A_B(\alpha)\) is convex and continuous on
\([0,1]\). A minimizing strength \(\alpha_B\) exists and satisfies
\(A_B(\alpha_B)\leq\min\{A_B(0),A_B(1)\}\).
\end{lemma}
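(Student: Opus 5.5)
The plan is to reduce everything to the fact that composing a convex function with an affine map gives a convex function. I would then use Weierstrass' extreme value theorem on the compact interval \([0,1]\). No earlier result of the paper is needed beyond the standing assumption that each trajectory-level error \(e_i\) is convex and continuous on the space of continuations.

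\textbf{Step 1 (convexity and continuity).} Fix a branch \(B\) and a calibration trajectory \(i\in C\). Define the affine map \(\phi_i(\alpha)=f_i+\alpha(q_{B,i}-f_i)\) from \(\mathbb R\) into the continuation space \(\mathbb R^H\). For \(\alpha,\alpha'\in[0,1]\) and \(\lambda\in[0,1]\), affinity gives
\[
\phi_i(\lambda\alpha+(1-\lambda)\alpha')=\lambda\phi_i(\alpha)+(1-\lambda)\phi_i(\alpha').
\]
Convexity of \(e_i\) then yields \(e_i\circ\phi_i(\lambda\alpha+(1-\lambda)\alpha')\le\lambda\, e_i(\phi_i(\alpha))+(1-\lambda)\,e_i(\phi_i(\alpha'))\). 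So each \(e_i\circ\phi_i\) is convex. It is also continuous, as a composition of continuous maps. \(A_B\) is a nonnegative-weighted average, with weights \(1/|C|\), of finitely many such functions. Since both convexity and continuity are preserved under nonnegative finite sums, \(A_B\) is convex and continuous on \([0,1]\).

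\textbf{Step 2 (existence).} \(A_B\) is continuous and \([0,1]\) is compact and nonempty, so the extreme value theorem gives a minimizer. The set of minimizers \(\operatorname*{arg\,min}_{[0,1]}A_B\) is the preimage of the minimum value under a continuous map, hence closed. By convexity it is also an interval. It therefore has a smallest element, so the tie-breaking rule ``toward the smaller strength'' is well defined and selects an actual minimizer \(\alpha_B\).

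\textbf{Step 3 (the bound).} Both endpoints \(0\) and \(1\) are feasible, so minimality gives \(A_B(\alpha_B)\le A_B(0)\) and \(A_B(\alpha_B)\le A_B(1)\). Hence \(A_B(\alpha_B)\le\min\{A_B(0),A_B(1)\}\). Note that \(A_B(0)\) is the free-forecast calibration error and \(A_B(1)\) is the full-correction error.

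The only step needing care is Step 2's tie-breaking. One has to check that the minimizer set is closed, so the ``smaller strength'' choice is attained rather than only approached as an infimum; closedness of the preimage handles this. Everything else is routine, and the final inequality does not even use convexity. Convexity serves only to make the argmin an interval and to justify a one-dimensional search in practice.
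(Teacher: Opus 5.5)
Your proof is correct and follows the paper's argument essentially step for step: you get convexity and continuity by composing with the affine map \(\alpha\mapsto f_i+\alpha(q_{B,i}-f_i)\) and averaging, existence from compactness of \([0,1]\), and the bound from feasibility of \(0\) and \(1\). Your additional check that the minimizer set is closed, so that the smaller-strength tie-break is actually attained, is a small refinement the paper leaves implicit.
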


Proof in Appendix~\ref{app:proof-strength-calibration}.

Lemma~\ref{lem:strength-calibration} establishes that each calibrated branch
has calibration error no greater than either the free forecast or the full
endpoint-guided correction. TCP then combines \(c_U\) and \(c_M\) using the
degradation-aware weighting rule in Section~\ref{sec:weighting}. The
experiments compare these calibrated corrections with their full-strength
counterparts.

\subsection{Degradation-Aware Combination}
\label{sec:weighting}

The calibrated \(U\) and \(M\) branches help different targets to different
degrees. Their mixture therefore balances average accuracy with regressions on
individual targets. Mean error alone does not capture this trade-off, since a
lower average error can coexist with larger errors on some trajectories. TCP then uses equal averaging as the accuracy reference. Among mixtures with no
higher mean error, it selects the one with the smallest positive error
increase relative to the free forecast. With \(\alpha_U\) and \(\alpha_M\) fixed, TCP combines the calibrated
branches as \(p(w)=w\,c_U+(1-w)c_M\), with \(w\in[0,1]\). Let \(e_i(z)\)
denote the convex, continuous trajectory-level error defined in
Section~\ref{sec:calibration}. On the calibration set \(C\), define the mean
error
\(L_C(w)=|C|^{-1}\sum_{i\in C}e_i(p_i(w))\)
and the mean positive error degradation relative to the free forecast
\(G_C(w)=|C|^{-1}\sum_{i\in C}
\max\{e_i(p_i(w))-e_i(f_i),0\}\).
Only trajectories whose error exceeds that of the free forecast contribute
to \(G_C\). Equal averaging provides the reference accuracy \(L_C(1/2)\),
so TCP restricts the weight to
\(F=\{w\in[0,1]\mid L_C(w)\leq L_C(1/2)\}\)
and selects
\(w^\star\in\operatorname*{arg\,min}_{w\in F}G_C(w)\),
with ties resolved toward \(1/2\).

\begin{lemma}[Structure of the weight problem]
\label{lem:weight-structure}
For fixed calibrated branches, \(L_C(w)\) and \(G_C(w)\) are convex and
continuous on \([0,1]\). Consequently, \(F\) is a closed interval containing
\(1/2\), and the minimizers of \(G_C\) on \([0,1]\) and on \(F\) form
nonempty closed intervals, each with a unique point closest to \(1/2\).
\end{lemma}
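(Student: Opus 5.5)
The argument reduces to three elementary facts: composing a convex function with an affine map preserves convexity, the positive part of a convex function minus a constant is convex, and averages of convex functions are convex.

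\emph{Convexity and continuity.} Fix \(\alpha_U,\alpha_M\), so \(c_{U,i}\) and \(c_{M,i}\) are fixed sequences for each \(i\in C\). Then \(p_i(w)=c_{M,i}+w(c_{U,i}-c_{M,i})\) is affine in \(w\). Since \(e_i\) is convex and continuous, \(w\mapsto e_i(p_i(w))\) is convex and continuous on \([0,1]\). Averaging over the finite set \(C\) shows that \(L_C\) is convex and continuous. For \(G_C\), the quantity \(e_i(f_i)\) does not depend on \(w\). Hence \(e_i(p_i(w))-e_i(f_i)\) is convex and continuous, and taking the pointwise maximum with the constant \(0\) keeps it convex and continuous. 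Averaging again gives the claim for \(G_C\).

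\emph{Structure of \(F\).} \(F\) is the sublevel set \(\{w\in[0,1]: L_C(w)\le L_C(1/2)\}\) of a convex continuous function on an interval. It is therefore convex, so it is an interval. It is closed, being the preimage of a closed set under a continuous map intersected with \([0,1]\). It contains \(1/2\) trivially, so it is a nonempty closed interval \([\ell,r]\subseteq[0,1]\).

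\emph{Minimizer sets.} \(G_C\) is continuous on the nonempty compact sets \([0,1]\) and \(F\), so by Weierstrass its minimum \(g^\ast\) on each is attained. The argmin set equals \(\{w: G_C(w)\le g^\ast\}\) within the domain. This is a sublevel set of a convex continuous function restricted to a closed interval, hence a nonempty closed interval \([s,t]\). The distance \(w\mapsto|w-1/2|\) is continuous and strictly convex along the line, so it has a unique minimizer on \([s,t]\). Concretely, this is \(1/2\) if \(1/2\in[s,t]\), \(s\) if \(s>1/2\), and \(t\) if \(t<1/2\). This shows that the tie-breaking rule is well defined.

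The main point needing care is the positive part in \(G_C\): one must note that the maximum of two convex functions (one of them the constant \(0\)) is convex. The remaining steps use only the affine dependence of \(p_i\) on \(w\) and standard facts about sublevel sets. No strict convexity is assumed, which is why the minimizers form intervals rather than single points.
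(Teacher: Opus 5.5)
Your proof is correct and follows the paper's argument essentially step for step: \(p_i\) is affine in \(w\), convexity and continuity survive composition, the positive part, and averaging, \(F\) is a sublevel set of \(L_C\) containing \(1/2\), and the minimizer set of a continuous convex function on a compact interval is a nonempty closed interval. One small slip is that \(w\mapsto|w-1/2|\) is not strictly convex, since it is affine on each side of \(1/2\); uniqueness of the point closest to \(1/2\) should rest on your explicit case formula, or on strict convexity of \((w-1/2)^2\), rather than on that claim.
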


Proof in Appendix~\ref{app:proof-weight-structure}.

Lemma~\ref{lem:weight-structure} reduces the constrained combination problem
to movement along one interval. Let \(w_D\) be the minimizer of \(G_C\) on
\([0,1]\) closest to \(1/2\). TCP starts from equal averaging and moves
toward \(w_D\) as far as the accuracy constraint permits. Define
\(\rho^\star=\max\{\rho\in[0,1]\mid
1/2+\rho(w_D-1/2)\in F\}\). The following theorem shows that this shrinkage
solves the constrained objective.

\begin{theorem}[Degradation-aware shrinkage]
\label{thm:shrinkage}
The weight \(w^\star=1/2+\rho^\star(w_D-1/2)\) minimizes \(G_C\) over
\(F\), resolves ties toward \(1/2\), and is the point of \(F\) nearest to
\(w_D\).
\end{theorem}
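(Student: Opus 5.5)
The plan is to reduce everything to one-dimensional convexity, using Lemma~\ref{lem:weight-structure}. That lemma gives: \(F=[\ell,u]\) is a closed interval with \(\ell\le 1/2\le u\); the minimizer set \(I\) of \(G_C\) on \([0,1]\) is a nonempty closed interval; and \(w_D\in I\) is its unique point closest to \(1/2\). By symmetry I will assume \(w_D\ge 1/2\); the case \(w_D=1/2\) is trivial, since then \(\rho^\star=1\) and \(w^\star=1/2\in F\).

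\textbf{Step 1: identify \(w^\star\) geometrically.} The set \(\{\rho\in[0,1]: 1/2+\rho(w_D-1/2)\in F\}\) is the preimage of the interval \(F\) under an affine map, intersected with \([0,1]\). It is therefore a closed interval containing \(0\), so \(\rho^\star\) exists. Concretely, \(w^\star=\min\{w_D,u\}\), which is the clamp of \(w_D\) onto \(F\). Since \(F\) is an interval, this clamp is exactly the point of \(F\) nearest to \(w_D\), which proves the last claim.

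\textbf{Step 2: optimality over \(F\).} A convex function on \([0,1]\) with a minimizer at \(w_D\) is nonincreasing on \([0,w_D]\). Suppose \(w_D\in F\). Then \(w^\star=w_D\) is a global minimizer, hence also a minimizer on \(F\). Otherwise \(u<w_D\), so \(F\subseteq[0,w_D]\), and monotonicity gives \(G_C(u)\le G_C(w)\) for all \(w\in F\). Thus \(w^\star=u\) minimizes \(G_C\) over \(F\).

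\textbf{Step 3: tie-breaking.} This is the step I expect to need the most care. I must show that no other minimizer of \(G_C\) over \(F\) is at least as close to \(1/2\) as \(w^\star\).

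If \(w^\star=w_D\in F\), the minimizers on \(F\) are \(I\cap F\). Because \(w_D\) is the point of \(I\) nearest \(1/2\), it is also the point of \(I\cap F\) nearest \(1/2\).

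If \(w^\star=u<w_D\), suppose some \(w\in F\), \(w\neq u\), has \(G_C(w)=G_C(u)\) and \(|w-1/2|\le|u-1/2|\). Then \(1/2\) lies in the closed segment between \(w\) and \(u\). Convexity gives \(G_C(1/2)\le G_C(u)\), and monotonicity on \([0,w_D]\) gives the reverse inequality, so \(G_C(1/2)=G_C(u)\). If \(1/2<u\), write \(u\) as a convex combination \(u=\lambda\cdot\tfrac12+(1-\lambda)w_D\) with \(\lambda\in(0,1)\). Convexity yields
\(G_C(u)\le\lambda G_C(u)+(1-\lambda)G_C(w_D)\), hence \(G_C(u)\le G_C(w_D)\). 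So \(u\in I\), and \(u\) is strictly closer to \(1/2\) than \(w_D\), contradicting the choice of \(w_D\). If instead \(u=1/2\), then \(w^\star=1/2\) is already the closest possible point to \(1/2\). Either way, \(w^\star\) is the unique minimizer over \(F\) closest to \(1/2\), which matches the tie-breaking rule of the selection.
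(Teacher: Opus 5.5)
Your Steps 1 and 2 are correct, but Step 3 rests on a false assertion. In the case \(u<w_D\), you assume \(w\in F\), \(w\neq u\), and \(|w-1/2|\le|u-1/2|\), and conclude that \(1/2\) lies in the closed segment between \(w\) and \(u\). That holds only when \(w\le 1/2\). Since \(F=[\ell,u]\), your hypotheses also allow \(1/2<w<u\); then \(1/2\notin[w,u]\), and you cannot derive \(G_C(1/2)=G_C(u)\).

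The repair is immediate, because your contradiction only needs some point \(x<u\) with \(G_C(x)=G_C(u)\). In the missed case take \(x=w\). Write \(u=\lambda w+(1-\lambda)w_D\) with \(\lambda\in(0,1)\), which is possible since \(w<u<w_D\). Convexity gives \(G_C(u)\le\lambda G_C(u)+(1-\lambda)G_C(w_D)\), so \(G_C(u)\le G_C(w_D)\). Hence \(u\in I\) and \(u\) is strictly closer to \(1/2\) than \(w_D\), the same contradiction as before.

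With that fix, your argument is valid and differs from the paper's mainly in how ties are handled. The paper first proves Lemma~\ref{lem:strict}: \(G_C\) is strictly decreasing on \([0,m_1]\) and strictly increasing on \([m_2,1]\), where \(\mathcal M=[m_1,m_2]\) is the unrestricted minimizer interval (your \(I\)). Since \(w_D>1/2\) forces \(w_D=m_1\), whenever \(w_D\notin F\) the right endpoint of \(F\) is the \emph{unique} minimizer of \(G_C\) on \(F\). The tie rule then holds vacuously, and only the case \(w_D\in F\) needs the nearest-point observation you also make. You instead use only non-strict monotonicity for optimality and then exclude competing minimizers by a convexity contradiction, which amounts to re-deriving that strictness pointwise. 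Isolating it as a lemma, as the paper does, removes exactly the case split where your slip occurred.
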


Theorem~\ref{thm:shrinkage} gives the TCP mixture weight directly. If
\(w_D\in F\), then \(w^\star=w_D\). Otherwise, \(w^\star\) is the boundary
of \(F\) closest to \(w_D\). The accuracy constraint therefore determines
how far TCP moves from equal averaging toward the degradation-minimizing
weight. Together with the separate branch calibration in
Section~\ref{sec:calibration}, this construction yields the following
bounds.

\begin{proposition}[Calibration guarantees]
\label{prop:calibration}
Let
\(\bar e_C(f)=|C|^{-1}\sum_{i\in C}e_i(f_i)\).
The calibrated TCP mixture satisfies
\(L_C(w^\star)\leq L_C(1/2)
\leq \frac12[A_U(\alpha_U)+A_M(\alpha_M)]
\leq \bar e_C(f)\)
and
\(G_C(w^\star)\leq G_C(1/2)\).
\end{proposition}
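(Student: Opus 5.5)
The four inequalities will be proved one link at a time. Each link uses only three facts stated earlier: membership in \(F\), convexity of the trajectory-level errors \(e_i\), and Lemma~\ref{lem:strength-calibration}.

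\textbf{First link.} By Theorem~\ref{thm:shrinkage}, \(w^\star\) minimizes \(G_C\) over \(F\), so in particular \(w^\star\in F\). By the definition of \(F\), this gives \(L_C(w^\star)\leq L_C(1/2)\) immediately.

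\textbf{Second link.} Here I use convexity of each \(e_i\) at the midpoint. Since \(p_i(1/2)=\tfrac12 c_{U,i}+\tfrac12 c_{M,i}\), Jensen's inequality gives
\[
e_i(p_i(1/2))\leq \tfrac12 e_i(c_{U,i})+\tfrac12 e_i(c_{M,i}).
\]
Recall that \(c_{B,i}=f_i+\alpha_B(q_{B,i}-f_i)\). Averaging over \(i\in C\), the right-hand side becomes \(\tfrac12[A_U(\alpha_U)+A_M(\alpha_M)]\), by the definition of \(A_B\).

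\textbf{Third link.} Lemma~\ref{lem:strength-calibration} gives \(A_B(\alpha_B)\leq A_B(0)\) for \(B\in\{U,M\}\). Moreover \(A_B(0)=|C|^{-1}\sum_i e_i(f_i)=\bar e_C(f)\), because \(\alpha=0\) returns the free forecast. Averaging the two branch inequalities yields \(\tfrac12[A_U(\alpha_U)+A_M(\alpha_M)]\leq\bar e_C(f)\).

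\textbf{Degradation bound.} Lemma~\ref{lem:weight-structure} shows that \(F\) is an interval containing \(1/2\). Theorem~\ref{thm:shrinkage} shows that \(w^\star\) minimizes \(G_C\) over \(F\). Since \(1/2\in F\), it follows that \(G_C(w^\star)\leq G_C(1/2)\).

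The main obstacle is bookkeeping rather than analysis. The point to check is that the \(c_B\) entering the mixture \(p(w)\) are exactly the calibrated branches evaluated at the chosen \(\alpha_B\), so that the Jensen step produces \(A_B(\alpha_B)\) rather than some other quantity. I would also confirm that convexity is needed only for each \(e_i\) separately, not for the sum. No further property of \(G_C\) is required, because the degradation bound follows from feasibility of \(1/2\) alone.
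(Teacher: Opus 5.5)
Your proposal is correct and follows the paper's proof essentially step for step. It uses the same four ingredients in the same order: feasibility of \(w^\star\) in \(F\), midpoint convexity of each \(e_i\) averaged over \(C\), Lemma~\ref{lem:strength-calibration} for each branch, and feasibility of \(1/2\) for the degradation bound. The only addition is that you state the identity \(A_B(0)=\bar e_C(f)\) explicitly, which the paper leaves implicit.
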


Proof in Appendix~\ref{app:proof-calibration-guarantees}.

Proposition~\ref{prop:calibration} connects the two calibration stages.
Each endpoint-guided branch is first calibrated relative to the free
forecast, after which the two calibrated corrections are combined to reduce
positive degradation while preserving the mean accuracy of equal averaging.
At continuation position \(j\), TCP returns
\(\widehat y_{\mathrm{TCP},j}
=
f_j
+w^\star\alpha_U(q_{U,j}-f_j)
+(1-w^\star)\alpha_M(q_{M,j}-f_j)\).

\section{Experiments}
\label{sec:exp-new}

We structure the experiments around how TCP improves a forecast. We first ask whether endpoint transfer improves forecasting across target-only and source-pooled base forecasters in Section~\ref{sec:rq1-tcp}. We then trace these gains back to the transferred endpoint information in Section~\ref{sec:rq2-source}, examining which sources provide the strongest signal, why the \(U\) and \(M\) estimates are complementary, how representative selection preserves that information, and when target-consistent projection helps. Finally, Section~\ref{sec:rq3-harm} studies how this information should influence the free forecast through correction-strength calibration and degradation-aware combination.

\subsection{Experimental Setup}
\label{sec:exp-setup}

We evaluate \(25{,}245\) targets from 73 cohorts across five datasets.
gCSI \citep{haverty2016gcsi}, CTRPv2 \citep{seashoreludlow2015ctrp},
and GDSC2 \citep{yang2013gdsc,iorio2016gdsc} provide dose--response
curves grouped by compound. OLMo~2 \citep{olmo2024} provides model
evaluation curves across training budgets, grouped by evaluation metric.
LCDB \citep{mohr2022lcdb} provides learning curves across training-set
sizes, grouped by dataset. For each length-\(L\) trajectory, we observe
the first \(\max\{6,\lfloor0.70L\rfloor\}\) points and forecast the
remainder, requiring at least two held-out points. Each cohort is divided into five folds shared
across directions, with source and calibration trajectories drawn from
outside the target's fold. The main comparison uses eight parametric forecasters listed in
Table~\ref{tab:rq1-tcp}
\citep{ritz2015dose,drc2026manual,mahmood2022data,domhan2015speeding},
frozen NSL-PFN \citep{lee2025nslpfn}, and LKGP \citep{lin2024scaling,lin2025scalable}. Each free forecast
uses only the target prefix, including its normalization, and TCP
corrects that same forecast. We measure forecasting error by RMSE over the held-out continuation,
\(e_i(z)=H_i^{-1/2}\|z-y_i\|_2\), where \(y_i\) contains the
\(H_i\) held-out responses. For variant \(v\), let
\(\ell_i^v=e_i(\widehat y_i^v)\) and
\(L_v=N^{-1}\sum_{i=1}^{N}\ell_i^v\).
We report mean error, percentage improvement
\(100(1-L_v/L_{\mathrm{ref}})\), total degradation
\(H_v=\sum_i\max\{\ell_i^v-\ell_i^{\mathrm{free}},0\}\), and the
number of targets with increased error. More details
are in Appendix~\ref{app:current-implementation}.

\subsection{RQ1. Endpoint Transfer across Target-Only and Source-Pooled Forecasters}
\label{sec:rq1-tcp}

\begin{table*}[t]
\centering\footnotesize\setlength{\tabcolsep}{1.5pt}
\caption{
Mean RMSE over the held-out continuation for each target-only forecaster
and its TCP corrections.
}
\label{tab:rq1-tcp}
\begin{tabular*}{\textwidth}{@{\extracolsep{\fill}}lrr@{\hspace{5pt}}rr@{\hspace{5pt}}rrrrr@{\hspace{5pt}}rrrrr@{}}
\toprule
& \multicolumn{2}{c}{RMSE} & \multicolumn{2}{c}{Gain (\%)} & \multicolumn{5}{c}{Forward gain (\%)} & \multicolumn{5}{c}{Reverse gain (\%)}\\
\cmidrule(lr){2-3}\cmidrule(lr){4-5}\cmidrule(lr){6-10}\cmidrule(l){11-15}
Forecaster & free & TCP & Pooled & Avg. & OLMo~2 & LCDB & gCSI & CTRP & GDSC & OLMo~2 & LCDB & gCSI & CTRP & GDSC\\
\midrule
Michaelis--Menten & 0.389 & 0.246 & 36.7 & 40.8 & 15.6 & 22.5 & 43.4 & 22.4 & 20.0 & 34.9 & 59.0 & 74.7 & 68.1 & 47.6\\
exp.\ saturation & 0.383 & 0.246 & 35.8 & 41.2 & 29.0 & 23.9 & 43.2 & 22.4 & 19.8 & 34.5 & 58.0 & 74.1 & 62.5 & 44.1\\
Gompertz & 0.380 & 0.244 & 35.8 & 39.1 & 28.8 & 12.3 & 42.1 & 21.3 & 27.8 & 26.3 & 39.0 & 72.9 & 66.6 & 53.7\\
Weibull cure & 0.428 & 0.248 & 42.1 & 42.9 & 31.9 & 9.9 & 49.7 & 25.5 & 36.9 & 27.8 & 40.8 & 77.1 & 70.7 & 58.5\\
log--log cure & 0.405 & 0.246 & 39.3 & 41.3 & 36.0 & 5.4 & 46.4 & 23.6 & 33.3 & 28.1 & 40.2 & 75.5 & 68.4 & 56.3\\
arctangent & 0.383 & 0.246 & 35.8 & 40.8 & 26.6 & 22.1 & 43.1 & 22.4 & 19.8 & 34.3 & 57.3 & 73.5 & 63.7 & 44.8\\
logistic & 0.414 & 0.252 & 39.2 & 42.6 & 28.9 & 22.0 & 47.8 & 24.3 & 34.8 & 27.9 & 50.5 & 70.2 & 66.9 & 53.1\\
power law & 0.383 & 0.244 & 36.3 & 40.1 & 18.8 & 20.1 & 45.4 & 24.1 & 24.7 & 39.4 & 64.3 & 63.7 & 59.3 & 40.9\\
LKGP & 0.394 & 0.242 & 38.5 & 44.4 & 67.8 & 14.8 & 47.8 & 25.4 & 29.7 & 29.8 & 49.5 & 70.1 & 60.3 & 48.9\\
NSL-PFN & 0.383 & 0.243 & 36.6 & 42.4 & 51.8 & 51.9 & 48.9 & 25.9 & 31.8 & 35.2 & 64.1 & 45.5 & 34.6 & 34.5\\
\bottomrule
\end{tabular*}
\end{table*}

\begin{figure}[!t]
\centering
\includegraphics[width=0.85\columnwidth]{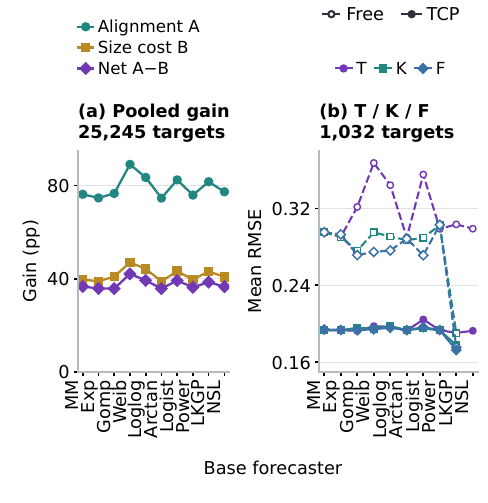}
\caption{Gain decomposition and source access.}
\label{fig:rq1-correction-diagnostic}
\label{fig:source-access}
\end{figure}

As shown in Table~\ref{tab:rq1-tcp}, TCP lowers mean forecasting RMSE for
all ten target-only forecasters by \(35.8\%\) to \(42.1\%\) and improves all
100 forecaster, dataset, and direction comparisons. The free forecasts have
similar overall errors ranging from \(0.380\) to \(0.428\), yet their TCP
gains still differ. Weibull cure improves the most at \(42.1\%\), followed
by log--log cure at \(39.3\%\), logistic at \(39.2\%\), and LKGP at
\(38.5\%\). This variation depends on how the TCP correction relates to the
error of the free forecast. To understand why TCP helps some forecasts more than others, we separate each
gain into how much the correction moves the forecast toward the truth and how
large that correction is. For target \(i\), let \(e_i=y_i-f_i\) be the error
of the free forecast and \(d_i=\widehat y_{\mathrm{TCP},i}-f_i\) be the TCP
correction. The reduction in error can be written exactly as
\(r_{F,i}-r_{T,i}=A_i-B_i\), where
\(A_i=2\,\operatorname{mean}(e_i d_i)/(r_{F,i}+r_{T,i})\) measures the
useful movement toward the truth and
\(B_i=\operatorname{mean}(d_i^2)/(r_{F,i}+r_{T,i})\) measures the size of
the correction. TCP gains more when the useful movement is large relative
to the correction size. Figure~\ref{fig:rq1-correction-diagnostic}a shows
this decomposition across forecasters. Larger gains occur when the aligned
contribution is large relative to the correction cost. Michaelis--Menten has
\(76.3\) points of aligned contribution and \(39.6\) points of correction
cost, giving a \(36.7\%\) gain, while Weibull cure has \(89.1\) and
\(47.0\) points, giving the largest gain of \(42.1\%\).

We next test whether TCP still helps when related trajectories are already
used to fit the base forecaster. We randomly sample targets from each dataset
and direction, yielding the same \(1{,}032\) targets for all source-access
settings. We compare target-only fitting (T), fitting with the source
trajectories selected by TCP (K), and fitting with all permitted sources (F). As shown in Figure~\ref{fig:source-access}b, TCP lowers mean error for
every free forecaster in all three settings, with gains of
\(33.4\%\) to \(46.2\%\) in T, \(6.5\%\) to \(36.1\%\) in K, and
\(1.6\%\) to \(36.1\%\) in F. The gain
decreases for LKGP as more source information enters the base
fit. Its RMSE falls from \(0.303\) in T to \(0.190\) in K and
\(0.176\) in F, while the TCP gain falls from \(37.2\%\) to \(6.5\%\)
and \(1.65\%\). This suggests that source-pooled fitting already
captures part of the information that TCP adds through endpoint transfer. Full results appear in Table~\ref{tab:tkf} and
Appendix~\ref{app:tkf}.
NSL-PFN supports T only; LKGP-F points average three seeds, with sample standard-deviation bars.

\subsection{RQ2. Source Information and Endpoint Construction}
\label{sec:rq2-source}

\begin{figure}[!t]
\centering
\includegraphics[width=0.85\columnwidth]{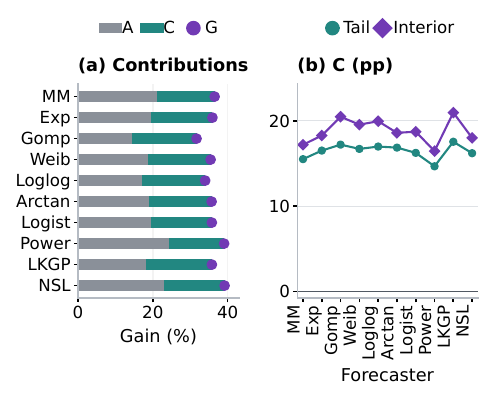}
\caption{Endpoint-source contributions.}
\label{fig:source-decomposition}
\end{figure}

\begin{figure}[!t]
\centering
\includegraphics[width=0.85\columnwidth]{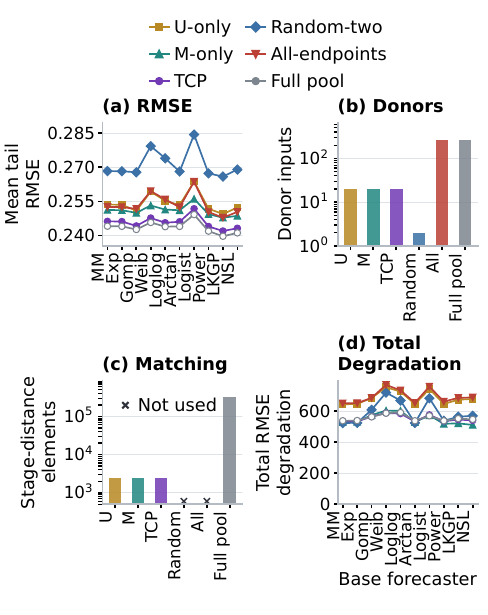}
\caption{Endpoint construction.}
\label{fig:endpoint-pool-comparison}
\end{figure}

\begin{figure}[!t]
\centering
\includegraphics[width=0.85\columnwidth]{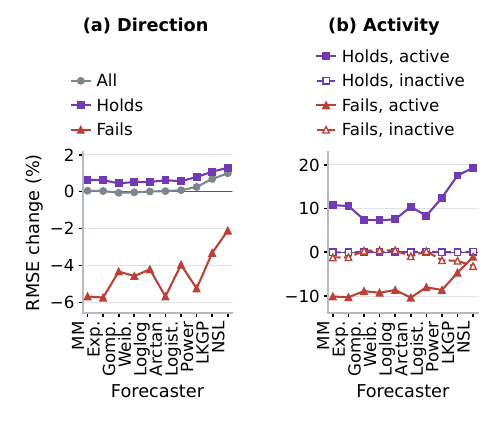}
\caption{Direction constraint.}
\label{fig:direction-constraint}
\end{figure}

We next examine how the source cohort affects endpoint quality. Alternative-cohort
endpoints already improve over the free forecast for all ten forecasters, reducing
error by \(14.5\%\) to \(24.4\%\) in Figure~\ref{fig:source-decomposition}.
This shows that related trajectories share endpoint information that transfers
across cohorts. Using endpoints from the target's own cohort adds another
\(14.7\%\) to \(17.5\%\) of the free-forecast error, showing that more closely
related trajectories provide stronger endpoint information. The
same pattern remains after removing the terminal observation, where own-cohort
endpoints add \(16.4\%\) to \(21.0\%\). This indicates that the source of the
endpoint affects not only the estimated terminal response but also the quality
of the continuation induced by that estimate. Together, these results show that endpoint information itself improves
forecasting, and that its contribution grows as the source trajectories become more closely related to the target.

After showing that source relevance improves endpoint quality, we next test
how endpoint information should be aggregated within the relevant source
pool. Figure~\ref{fig:endpoint-pool-comparison} compares six endpoint constructions.
U-only and M-only use one of TCP's two endpoint signals, Random-two uses two
random donor endpoints, All-endpoints averages donor-specific corrections,
and Full pool constructs TCP's two endpoint estimates from all permitted
donors. TCP lowers mean RMSE over U-only by \(2.7\%\) to \(4.5\%\), M-only by
\(1.7\%\) to \(2.4\%\), Random-two by \(8.2\%\) to \(11.5\%\), and
All-endpoints by \(2.4\%\) to \(4.6\%\) across all ten forecasters. It also reduces total degradation over U-only and All-endpoints for all ten,
by \(17.2\%\) to \(22.5\%\) and \(17.5\%\) to \(24.1\%\), respectively,
while the differences relative to M-only and Random-two are smaller and mixed,
ranging from \(5.6\%\) higher to \(2.9\%\) lower and from \(1.9\%\) higher
to \(18.3\%\) lower. These comparisons validate the complementary endpoint design introduced in
Section~\ref{sec:bcp}. Uniform pooling contributes direct information about
the response level at the forecasting horizon, while matched-increment
pooling contributes the remaining change after aligning source and target
states. Their combination consistently improves over either signal alone,
showing that both components are needed to estimate the unresolved endpoint
well. We next ask how much source selection sacrifices relative to using every
permitted donor. Full pool lowers mean RMSE over TCP by only \(0.64\%\) to
\(1.02\%\), while its total degradation ranges from \(1.28\%\) higher to
\(1.00\%\) lower. Achieving this small accuracy gain requires \(256.85\)
donor inputs and \(331{,}685\) matching distances per case, compared with
\(20.29\) and \(2{,}389\) for TCP, or \(12.7\) and \(138.8\) times more
source and matching work. This validates the selection mechanism in Section~\ref{sec:bcp}: the
MMD-based representative set preserves nearly all of the endpoint information
used by \(U\) and \(M\) while sharply reducing source and matching work.

We finally test the target-consistent projection in
Section~\ref{sec:decomp}. Figure~\ref{fig:direction-constraint} shows that
its effect is concentrated where Lemma~\ref{lem:projection} predicts.
When \(\sigma(V_T-b)\geq0\), TCP lowers mean RMSE for all ten forecasters
by \(0.45\%\) to \(1.29\%\), and when projection is active the gain rises
to \(7.3\%\) to \(19.3\%\). When projection is inactive, the difference is
near zero. When the endpoint direction condition fails, removing the
constraint improves RMSE by \(2.1\%\) to \(5.7\%\). These results match the
projection mechanism. Clipping an endpoint that violates the declared
direction improves forecasting when that direction is correct, with the
largest effect when projection actually changes \(U\) or \(M\).

\subsection{RQ3. Calibration and Degradation-Aware Combination}
\label{sec:rq3-harm}

\begin{figure}[!t]
\centering
\includegraphics[width=0.85\columnwidth]{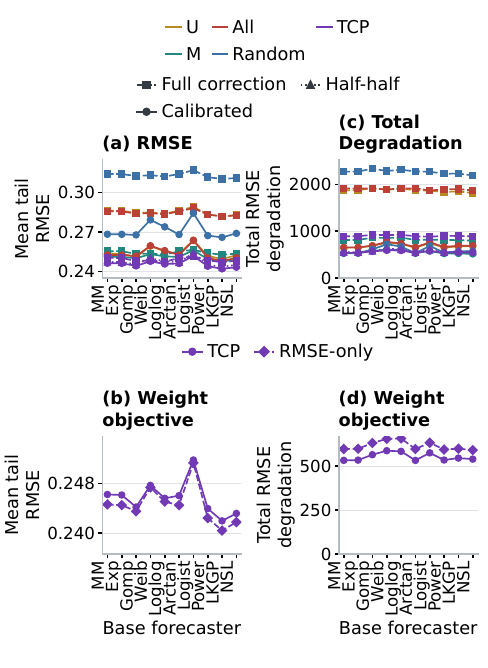}
\caption{Calibration and weighting.}
\label{fig:calibration-weighting}
\end{figure}

The endpoint comparisons establish what information TCP transfers. We next
test how strongly that information should modify the free forecast.
Figure~\ref{fig:calibration-weighting}a and c compare full and calibrated
corrections for \(U\), \(M\), All-endpoints, Random-two, and TCP.
Calibration lowers mean RMSE by \(8.8\%\) to \(11.7\%\) for \(U\),
\(0.1\%\) to \(1.9\%\) for \(M\), \(8.5\%\) to \(12.1\%\) for
All-endpoints, and \(10.3\%\) to \(14.7\%\) for Random-two, while reducing
their total degradation by \(29.2\%\) to \(77.0\%\). For TCP, mean RMSE
changes by \(-0.15\%\) to \(2.0\%\), while degradation falls by
\(34.4\%\) to \(40.4\%\). This matches the strength calibration in
Section~\ref{sec:calibration}, where \(\alpha_B\) scales the displacement
\(q_B-f\) and controls how much each transferred endpoint changes the free
forecast. Having established the value of controlling correction strength,
we next examine how the calibrated \(U\) and \(M\) corrections should be
combined. Figure~\ref{fig:calibration-weighting}b and d hold these branches
fixed and vary only the weighting rule. Relative to equal averaging, TCP
lowers mean RMSE by \(0.39\%\) to \(0.98\%\) and total degradation by
\(1.8\%\) to \(5.5\%\). RMSE-only weighting lowers mean RMSE by a further
\(0.13\%\) to \(0.66\%\), while TCP lowers its degradation by
\(8.7\%\) to \(11.2\%\). This validates the degradation-aware combination
in Section~\ref{sec:weighting}, which moves from equal averaging toward the
degradation-minimizing weight while preserving the calibration accuracy
constraint.

\section{Conclusion}

We introduced TCP for trajectory extrapolation by transferring finite-horizon endpoint information from completed related trajectories into a free forecast. Across \(25{,}245\) targets, TCP reduces mean RMSE by \(35.8\%\) to \(42.1\%\) for all ten target-only forecasters and remains effective when sources also enter the base fit. The experiments show that closer source cohorts provide stronger endpoint information, uniform and matched-increment pooling contribute complementary signals, and their combination outperforms either alone as well as random donor and donor-averaging alternatives. MMD-based selection stays within \(0.64\%\) to \(1.02\%\) of the full donor pool while using \(12.7\times\) fewer donor inputs and \(138.8\times\) fewer matching distances. Target-consistent projection improves forecasts most when its direction condition holds, while strength calibration and degradation-aware weighting control how strongly endpoint information changes the free forecast. These results establish TCP as an effective and efficient mechanism for trajectory extrapolation.

\bibliographystyle{apalike}
\bibliography{ref_new}

@article{manski1990nonparametric,
  title={Nonparametric bounds on treatment effects},
  author={Manski, Charles F},
  journal={The American Economic Review},
  volume={80},
  number={2},
  pages={319--323},
  year={1990},
  publisher={JSTOR}
}

@article{manski1997monotone,
  title={Monotone treatment response},
  author={Manski, Charles F},
  journal={Econometrica: Journal of the Econometric Society},
  pages={1311--1334},
  year={1997},
  publisher={JSTOR}
}

@article{shi2012mixed,
  title={Mixed-effects {Gaussian} process functional regression models with application to dose--response curve prediction},
  author={Shi, JQ and Wang, B and Will, EJ and West, RM},
  journal={Statistics in medicine},
  volume={31},
  number={26},
  pages={3165--3177},
  year={2012},
  publisher={Wiley Online Library}
}

@article{gutierrez2024multi,
  title={Multi-output prediction of dose--response curves enables drug repositioning and biomarker discovery},
  author={Gutierrez, Juan-Jos{\'e} Giraldo and Lau, Evelyn and Dharmapalan, Subhashini and Parker, Melody and Chen, Yurui and {\'A}lvarez, Mauricio A and Wang, Dennis},
  journal={npj Precision Oncology},
  volume={8},
  number={1},
  pages={209},
  year={2024},
  publisher={Nature Publishing Group UK London}
}

@inproceedings{domhan2015speeding,
  title={Speeding Up Automatic Hyperparameter Optimization of Deep Neural Networks by Extrapolation of Learning Curves},
  author={Domhan, Tobias and Springenberg, Jost Tobias and Hutter, Frank},
  booktitle={Proceedings of the Twenty-Fourth International Joint Conference on Artificial Intelligence},
  pages={3460--3468}, year={2015},
  url={https://www.ijcai.org/Proceedings/15/Papers/487.pdf}
}

@inproceedings{klein2017learning,
  title={Learning curve prediction with {Bayesian} neural networks},
  author={Klein, Aaron and Falkner, Stefan and Springenberg, Jost Tobias and Hutter, Frank},
  booktitle={International conference on learning representations},
  year={2017}
}

@article{kaplan2020scaling,
  title={Scaling laws for neural language models},
  author={Kaplan, Jared and McCandlish, Sam and Henighan, Tom and Brown, Tom B and Chess, Benjamin and Child, Rewon and Gray, Scott and Radford, Alec and Wu, Jeffrey and Amodei, Dario},
  journal={arXiv preprint arXiv:2001.08361},
  year={2020}
}

@article{hoffmann2022training,
  title={Training compute-optimal large language models},
  author={Hoffmann, Jordan and Borgeaud, Sebastian and Mensch, Arthur and Buchatskaya, Elena and Cai, Trevor and Rutherford, Eliza and Casas, Diego de Las and Hendricks, Lisa Anne and Welbl, Johannes and Clark, Aidan and others},
  journal={arXiv preprint arXiv:2203.15556},
  year={2022}
}

@article{choshen2024hitchhiker,
  title={A Hitchhiker's guide to scaling law estimation},
  author={Choshen, Leshem and Zhang, Yang and Andreas, Jacob},
  journal={arXiv preprint arXiv:2410.11840},
  year={2024}
}

@article{efron1973stein,
  title={Stein's estimation rule and its competitors—an empirical {Bayes} approach},
  author={Efron, Bradley and Morris, Carl},
  journal={Journal of the American Statistical Association},
  volume={68},
  number={341},
  pages={117--130},
  year={1973},
  publisher={Taylor \& Francis}
}

@article{morris1983parametric,
  title={Parametric empirical {Bayes} inference: theory and applications},
  author={Morris, Carl N},
  journal={Journal of the American Statistical Association},
  volume={78},
  number={381},
  pages={47--55},
  year={1983},
  publisher={Taylor \& Francis}
}

@article{ibrahim2000power,
  title={Power prior distributions for regression models},
  author={Ibrahim, Joseph G and Chen, Ming-Hui},
  journal={Statistical Science},
  pages={46--60},
  year={2000},
  publisher={JSTOR}
}

@article{neuenschwander2010summarizing,
  title={Summarizing historical information on controls in clinical trials},
  author={Neuenschwander, Beat and Capkun-Niggli, Gorana and Branson, Michael and Spiegelhalter, David J},
  journal={Clinical Trials},
  volume={7},
  number={1},
  pages={5--18},
  year={2010},
  publisher={SAGE Publications Sage UK: London, England}
}

@article{schmidli2014robust,
  title={Robust meta-analytic-predictive priors in clinical trials with historical control information},
  author={Schmidli, Heinz and Gsteiger, Sandro and Roychoudhury, Satrajit and O'Hagan, Anthony and Spiegelhalter, David and Neuenschwander, Beat},
  journal={Biometrics},
  volume={70},
  number={4},
  pages={1023--1032},
  year={2014},
  publisher={Oxford University Press}
}

@article{ben2010theory,
  title={A theory of learning from different domains},
  author={Ben-David, Shai and Blitzer, John and Crammer, Koby and Kulesza, Alex and Pereira, Fernando and Vaughan, Jennifer Wortman},
  journal={Machine learning},
  volume={79},
  number={1},
  pages={151--175},
  year={2010},
  publisher={Springer}
}

@article{hanneke2019value,
  title={On the value of target data in transfer learning},
  author={Hanneke, Steve and Kpotufe, Samory},
  journal={Advances in Neural Information Processing Systems},
  volume={32},
  year={2019}
}

@article{zhang2022survey,
  title={A survey on negative transfer},
  author={Zhang, Wen and Deng, Lingfei and Zhang, Lei and Wu, Dongrui},
  journal={IEEE/CAA Journal of Automatica Sinica},
  volume={10},
  number={2},
  pages={305--329},
  year={2022},
  publisher={IEEE}
}

@article{haverty2016gcsi,
  title   = {Reproducible Pharmacogenomic Profiling of Cancer Cell Line Panels},
  author  = {Haverty, Peter M. and Lin, Eva and Tan, Jenille and Yu, Yihong and
             Lam, Basil and Lianoglou, Steve and Neve, Richard M. and Martin,
             Scott and Settleman, Jeff and Yauch, Robert L. and Bourgon, Richard},
  journal = {Nature},
  volume  = {533},
  number  = {7603},
  pages   = {333--337},
  year    = {2016},
  note    = {Genentech Cell Line Screening Initiative (gCSI)}
}

@article{seashoreludlow2015ctrp,
  title   = {Harnessing Connectivity in a Large-Scale Small-Molecule Sensitivity
             Dataset},
  author  = {Seashore-Ludlow, Brinton and Rees, Matthew G. and Cheah, Jaime H.
             and Cokol, Murat and Price, Edmund V. and Coletti, Matthew E. and
             Jones, Victor and Bodycombe, Nicole E. and Soule, Christian K. and
             Gould, Joshua and Alexander, Benjamin and Li, Ava and Montgomery,
             Philip and Wawer, Mathias J. and Kuru, Nurdan and Kotz, Joanne D.
             and Hon, C. Suk-Yee and Munoz, Benito and Liefeld, Ted and
             Dan{\v{c}}{\'i}k, Vlado and Bittker, Joshua A. and Palmer, Michelle
             and Bradner, James E. and Shamji, Alykhan F. and Clemons, Paul A.
             and Schreiber, Stuart L.},
  journal = {Cancer Discovery},
  volume  = {5},
  number  = {11},
  pages   = {1210--1223},
  year    = {2015},
  note    = {Cancer Therapeutics Response Portal, CTRPv2}
}

@article{yang2013gdsc,
  title   = {Genomics of Drug Sensitivity in Cancer ({GDSC}): A Resource for
             Therapeutic Biomarker Discovery in Cancer Cells},
  author  = {Yang, Wanjuan and Soares, Jorge and Greninger, Patricia and Edelman,
             Elena J. and Lightfoot, Howard and Forbes, Simon and Bindal, Nidhi
             and Beare, Dave and Smith, James A. and Thompson, I. Richard and
             Ramaswamy, Sridhar and Futreal, P. Andrew and Haber, Daniel A. and
             Stratton, Michael R. and Benes, Cyril and McDermott, Ultan and
             Garnett, Mathew J.},
  journal = {Nucleic Acids Research},
  volume  = {41},
  number  = {D1},
  pages   = {D955--D961},
  year    = {2013}
}

@article{iorio2016gdsc,
  title   = {A Landscape of Pharmacogenomic Interactions in Cancer},
  author  = {Iorio, Francesco and Knijnenburg, Theo A. and Vis, Daniel J. and
             Bignell, Graham R. and Menden, Michael P. and Schubert, Michael and
             Aben, Nanne and Gon{\c{c}}alves, Emanuel and Barthorpe, Syd and
             Lightfoot, Howard and others},
  journal = {Cell},
  volume  = {166},
  number  = {3},
  pages   = {740--754},
  year    = {2016},
  note    = {Source of the {GDSC2} screening set}
}

@article{mohr2022lcdb,
  title={{LCDB} 1.0: An Extensive Learning Curves Database for Classification Tasks},
  author={Mohr, Felix and Viering, Tom J. and Loog, Marco and van Rijn, Jan N.},
  journal={Machine Learning and Knowledge Discovery in Databases (ECML PKDD)},
  year={2022}
}

@article{olmo2024,
  title={2 {OLMo} 2 Furious},
  author={{OLMo Team}},
  journal={arXiv preprint arXiv:2501.00656},
  year={2024}
}

@article{ritz2015dose,
  title={Dose-Response Analysis Using {R}},
  author={Ritz, Christian and Baty, Florent and Streibig, Jens C. and Gerhard, Daniel},
  journal={PLOS ONE}, volume={10}, number={12}, pages={e0146021}, year={2015},
  doi={10.1371/journal.pone.0146021}
}

@manual{drc2026manual,
  title={drc: Dose-Response Analysis Using R},
  author={Ritz, Christian and Streibig, Jens C.},
  year={2026}, url={https://cran.r-project.org/web/packages/drc/drc.pdf},
  note={Reference manual, version 4.0-0; accessed 6 October 2026. Mean functions MM, EXD, L, Gompertz, LL, and Weibull}
}

@inproceedings{mahmood2022data,
  title={How Much More Data Do I Need? Estimating Requirements for Downstream Tasks},
  author={Mahmood, Rafid and Lucas, James and Acuna, David and Li, Daiqing and Philion, Jonah and Alvarez, Jose M. and Yu, Zhiding and Fidler, Sanja and Law, Marc T.},
  booktitle={Proceedings of the IEEE/CVF Conference on Computer Vision and Pattern Recognition},
  pages={275--284}, year={2022}
}

@inproceedings{lee2025nslpfn,
  title={{Bayesian} Neural Scaling Law Extrapolation with Prior-Data Fitted Networks},
  author={Lee, Dongwoo and Lee, Dong Bok and Adriaensen, Steven and Lee, Juho and Hwang, Sung Ju and Hutter, Frank and Kim, Seon Joo and Lee, Hae Beom},
  booktitle={Proceedings of the 42nd International Conference on Machine Learning},
  series={Proceedings of Machine Learning Research}, volume={267},
  pages={33243--33275}, year={2025},
  url={https://proceedings.mlr.press/v267/lee25q.html}
}

@inproceedings{lin2025scalable,
  title={Scalable {Gaussian} Processes with Latent {Kronecker} Structure},
  author={Lin, Jihao Andreas and Ament, Sebastian and Balandat, Maximilian and Eriksson, David and Hern{\'a}ndez-Lobato, Jos{\'e} Miguel and Bakshy, Eytan},
  booktitle={Proceedings of the 42nd International Conference on Machine Learning},
  series={Proceedings of Machine Learning Research}, volume={267},
  pages={37730--37744}, year={2025},
  url={https://proceedings.mlr.press/v267/lin25a.html}
}

@inproceedings{lin2024scaling,
  title={Scaling {G}aussian Processes for Learning Curve Prediction via Latent {K}ronecker Structure},
  author={Lin, Jihao Andreas and Ament, Sebastian and Balandat, Maximilian and Bakshy, Eytan},
  booktitle={NeurIPS 2024 Workshop on Bayesian Decision-making and Uncertainty},
  year={2024}
}

@misc{nistcauchy,
  author={{NIST/SEMATECH}},
  title={Cauchy Distribution},
  howpublished={NIST/SEMATECH e-Handbook of Statistical Methods, Section 1.3.6.6.3},
  year={n.d.},
  url={https://www.itl.nist.gov/div898/handbook/eda/section3/eda3663.htm},
  note={Functional-form reference; accessed 6 October 2026}
}

\clearpage
\onecolumn
\raggedbottom
\appendix
\section{Appendix}
\label{app:method}

\begin{table}[H]
\centering
\small
\setlength{\tabcolsep}{5pt}
\caption{Definitions of the reported metrics.}
\label{tab:appendix-metrics}
\begin{tabular}{@{}p{0.22\textwidth}p{0.73\textwidth}@{}}
\toprule
Metric & Definition and aggregation\\
\midrule

Trajectory RMSE &
\(\ell_i^v=
\bigl[H_i^{-1}\sum_{j=1}^{H_i}
(\widehat y_{ij}^{v}-y_{ij})^2\bigr]^{1/2}\),
where \(H_i\) is the number of held-out observations.
Interior RMSE excludes the terminal observation and uses denominator
\(H_i-1\).\\[3pt]

Mean RMSE &
\(L_v(I)=|I|^{-1}\sum_{i\in I}\ell_i^v\).
Each trajectory has equal weight. The reported pooled RMSE is the mean
trajectory RMSE over the corresponding targets.\\[3pt]

RMSE gain &
\(100[1-L_v(I)/L_{\mathrm{ref}}(I)]\).
Positive values favor \(v\). Main-table Pooled uses \(I=E\), while Avg. is
the equal-weight mean of the ten dataset and direction gains.\\[3pt]

Positive degradation &
\(H_v(I)=\sum_{i\in I}
\max\{\ell_i^v-\ell_i^{\mathrm{free}},0\}\).
Only increases relative to the free forecast contribute.
Mean positive degradation is \(H_v(I)/|I|\), and calibration uses
\(G_C=H_p(C)/|C|\).\\[3pt]

Targets with increased error and negative cells &
\(n_{\mathrm{harm}}^v(I)=
\sum_{i\in I}{\bf1}\{\ell_i^v>\ell_i^{\mathrm{free}}\}\),
with rate \(100n_{\mathrm{harm}}^v/|I|\).
A negative cell instead satisfies
\(L_v(I_{\mathrm{cell}})>L_{\mathrm{free}}(I_{\mathrm{cell}})\).\\[3pt]

Degradation reduction &
\(100[1-H_v(I)/H_{\mathrm{ref}}(I)]\)
for a positive reference total. Component tables use TCP as \(v\), while
expanded-pool tables use Full. Each caption specifies the reference.\\[3pt]

Correction decomposition &
For \(r_{F,i}=\ell_i^{\mathrm{free}}\),
\(r_{T,i}=\ell_i^{\mathrm{TCP}}\),
\(\varepsilon_i=y_i-f_i\), and
\(d_i=\widehat y_i^{\mathrm{TCP}}-f_i\),
define
\(A_i=
2\operatorname{mean}(\varepsilon_i d_i)/(r_{F,i}+r_{T,i})\)
and
\(B_i=
\operatorname{mean}(d_i^2)/(r_{F,i}+r_{T,i})\).
Then \(r_{F,i}-r_{T,i}=A_i-B_i\).
If both errors vanish, both contributions are zero.
Figure~\ref{fig:rq1-correction-diagnostic} reports
\(100\operatorname{mean}_E(A_i)/L_{\mathrm{free}}\) and
\(100\operatorname{mean}_E(B_i)/L_{\mathrm{free}}\).\\[3pt]

Endpoint-source arms &
\(F\) is the free forecast,
\(B\) is own-cohort TCP,
\(P\) uses alternative-cohort endpoints with
\(\ell_i^P=|\mathcal A_i|^{-1}
\sum_{a\in\mathcal A_i}\ell_i^{P_a}\),
and \(O\) is the diagnostic that uses the direction-projected realized
terminal endpoint with the frozen \(B\) policy.\\[3pt]

Endpoint-source contributions &
\(G=100(L_F-L_B)/L_F\),
\(A=100(L_F-L_P)/L_F\), and
\(C_{\mathrm{src}}=100(L_P-L_B)/L_F\),
so \(G=A+C_{\mathrm{src}}\).
\(C_{\mathrm{src}}\) is the additional own-cohort contribution in points of
gain over the free forecast. The main source figure averages per-cohort
ratios equally over 67 cohorts and \(23{,}486\) paired targets.\\[3pt]

Further source estimands &
Target-weighted (TW) uses ratios of mean losses over the paired targets.
Cohort-mean-loss (CML) first averages arm losses equally across cohorts and
then forms the ratios. The \(1{,}759\) targets without an alternative cohort
enter only the full \(F/B\) comparison.\\[3pt]

Donor and matching work &
Donor inputs count donor and packet appearances after selection.
Matching elements count \(n_i(n_i+1)K_i\), where \(n_i\) is the number of
donors and \(K_i=m_i-4\) the number of candidate stages.
Reported means divide the total counts by the \(34{,}160\) calibration and
evaluation packets.\\[3pt]

Seed and sampling summaries &
Stochastic variants report the mean and sample standard deviation of three
seed-level statistics, with divisor two. Gain is averaged across seed-level
gains. Hajek losses use
\(\sum_i\ell_i^v/\pi_i\,/\,\sum_i1/\pi_i\),
where \(\pi_i\) is the inclusion probability, before forming the gain ratio.\\

\bottomrule
\end{tabular}
\end{table}

All compared variants use the same saved source-informed response coordinate.
Unless stated otherwise, \(E\) contains all \(25{,}245\) evaluation targets
and calibration objectives use \(C\).

\subsection{Methodology}
\label{app:current-implementation}

This section specifies the implementation used in the current experiments.
It follows the main method through continuation decomposition, representative
source selection, endpoint construction and projection, common forecast
correction, separate strength calibration, and degradation-aware combination.

\subsubsection{Continuation decomposition}
\label{app:proof-decomp}

The endpoint \(V_T=y(T)\) is the response at the finite forecasting horizon.
With observed boundary \(b=y(\tau)\), the response direction \(\sigma\)
defines
\(\mathcal C_b=\{v:\sigma(v-b)\geq0\}\).
For \(R=T-\tau>0\), every \(v\in\mathcal C_b\) and \(a>0\) admits
\begin{equation}
\label{eq:app-admissible-continuation}
y_{v,a}(\tau+r)=b+(v-b)(r/R)^a,\qquad 0\leq r\leq R.
\end{equation}
Each continuation starts at \(b\), reaches \(v\), and follows the declared
response direction. Distinct values of \(a\) give distinct shapes when
\(v\ne b\), while \(v=b\) gives the constant continuation. Thus the
direction constraint alone determines neither the endpoint nor the
continuation shape. Known response bounds further restrict the endpoint to
\(\mathcal C_b\cap[\ell,u]\), while rate, curvature, derivative, or
parametric assumptions can further restrict the continuation.

\paragraph{Proof of Lemma~\ref{lem:projection}.}
\label{app:proof-projection}

\begin{proof}
If \(\widehat V\in\mathcal C_b\), then
\(\Pi_{\mathcal C_b}(\widehat V)=\widehat V\), so both inequalities
hold with equality. Otherwise,
\(\sigma(\widehat V-b)<0\), while
\(\sigma(V_T-b)\geq0\) because \(V_T\in\mathcal C_b\).
Thus \(\widehat V\) and \(V_T\) lie on opposite sides of \(b\), and
\(\Pi_{\mathcal C_b}(\widehat V)=b\). Therefore,
\(
|\widehat V-V_T|
=
|\widehat V-b|+|V_T-b|
\geq
|V_T-b|
=
|\Pi_{\mathcal C_b}(\widehat V)-V_T|.
\)
Squaring the two nonnegative quantities gives the squared-error
inequality.
\end{proof}

\subsubsection{Representative calibration and source selection}

Within each outer fold, let \(O\) contain the permitted out-of-fold
trajectories. The calibration candidate pool is
\(P_C=O\cap\mathrm{valid}\), where valid trajectories can serve as targets
in both directions. Gaussian-kernel herding first selects
\(C\subseteq P_C\). The source candidate pool is then
\(P_D=O\setminus C\), from which the same procedure selects
\(D\subseteq P_D\). Calibration selection uses five z-scored prefix
descriptors, while source selection uses the full oriented source curves.

For a herded prefix \(D_m\) from a pool of size \(N\), let \(d_m\) be the
square root of its biased MMD estimate and set \(d_N=0\). TCP chooses the
smallest maximizer of
\[
1-\frac{m-1}{N-1}-\frac{d_m-d_N}{d_1-d_N}.
\]
The criterion balances selected-set size against discrepancy from the full
pool. The Gaussian kernel uses the median pairwise distance and herding ties
follow pool order. Singleton and zero-discrepancy pools retain one member.
If all knee scores are nonpositive, TCP retains the first representative.

\subsubsection{Endpoint construction and directional projection}
\label{app:proof-endpoint-complementarity}

In the shared response coordinate, uniform pooling is
\[
U_{\mathrm{raw}}
=
|D|^{-1}\sum_{k\in D}y_{k,L}.
\]
Matched-increment pooling instead transfers an \(H\)-step response change
from a source stage aligned with the target. TCP compares the target's last
five prefix observations with five-point source windows ending at
\(p_k\in\{5,\ldots,m\}\). Each window is represented by its final level and
four first differences, scaled by row norms \(1\) and \(\sqrt 2\).
The RMS distance between these features is \(\delta_k\), with ties assigned
to the later source stage.

For \(H=L-m\), define the matched source horizon
\(T_k=p_k+H\) and the transferred change
\[
\Delta_k=y_{k,T_k}-y_{k,p_k}.
\]
The matched-increment estimate is
\[
M_{\mathrm{raw}}
=
b+
\frac{\sum_{k\in D}\omega_k\Delta_k}
     {\sum_{k\in D}\omega_k},
\qquad
\omega_k=\exp[-(\delta_k/\lambda)^2].
\]
Thus each source contributes the response change over the same continuation
length \(H\) as the target.

The bandwidth is fitted entirely from source trajectories. Each completed
source is treated as a pseudo-target at the target boundary index and its
endpoint is reconstructed from the remaining sources. The objective is
leave-one-source-out endpoint MAE. With \(\bar\delta\) the median positive
source-to-source match distance, TCP sets
\(\lambda=\kappa\bar\delta\) and searches
\(\log\kappa\in[\log10^{-3},\log50]\) with bounded Brent search at tolerance
\(10^{-3}\). A near-uniform candidate \(\kappa=10^6\) is also evaluated and
is selected when its MAE is no larger than the finite candidate. If
\(\sum_k\omega_k\leq10^{-12}\), TCP uses the median increment.

For either raw endpoint estimate \(\widehat V\), TCP applies
\[
\widetilde V
=
b+\sigma\max\{\sigma(\widehat V-b),0\}.
\]
When \(V_T\in\mathcal C_b\), Lemma~\ref{lem:projection} establishes that
this projection cannot increase absolute or squared endpoint error.
Feasible raw estimates remain unchanged.

\begin{proof}[Proof of Proposition~\ref{prop:endpoint-complementarity}]
Substituting \(y_k(T)=V_T+\beta_k+\varepsilon_{k,T}\) into uniform pooling
gives
\(U_{\mathrm{raw}}
=V_T+\sum_{k\in D}u_k\beta_k
+\sum_{k\in D}u_k\varepsilon_{k,T}\).
For matched-increment pooling,
\(y_k(T)-y_k(p_k)
=V_T-b+\varepsilon_{k,T}-\varepsilon_{k,p}\),
because the shared offset \(\beta_k\) cancels. Substituting this expression
into \(M_{\mathrm{raw}}\) and using \(\sum_{k\in D}a_k=1\) gives the result.
\end{proof}

\subsubsection{Forecast correction}

Every base forecaster uses the same endpoint-guided operator
\(\mathcal Q\) on its saved free continuation
\(f_1,\ldots,f_H\). Set \(f_0=b\) and
\[
a_j=f_j+\frac{j}{H}(V-f_H),
\qquad
a_0=b.
\]
This bridge distributes the endpoint displacement smoothly across the
continuation. If
\(\delta_j=a_j-f_j\) and
\(u_j=\delta_j-\delta_{j-1}\), the endpoint conditions give
\(\sum_j u_j=V-f_H\). Cauchy--Schwarz gives
\[
\sum_j u_j^2\geq\frac{(V-f_H)^2}{H},
\]
with equality only when
\(u_j=(V-f_H)/H\).
The residual bridge therefore uniquely minimizes residual increment energy
under the endpoint conditions.

TCP then projects the bridge onto sequences anchored at \(b\) and \(V\),
bounded between them, and monotone in the direction of \(V-b\).
The feasible set is nonempty, closed, and convex, and squared distance is
strictly convex on the free interior coordinates, giving a unique solution.
For \(V\ne b\), writing \(s=\operatorname{sign}(V-b)\) transforms the
interior problem into unweighted isotonic regression of \(s(a_j-b)\) with
bounds \([0,|V-b|]\). PAVA followed by clipping its constant blocks to these
bounds gives the bounded solution, and the terminal sample is assigned
\(V\). When \(V=b\), the corrected continuation is constant.

\subsubsection{Separate correction-strength calibration}
\label{app:proof-strength-calibration}

For \(B\in\{U,M\}\), TCP constructs
\[
q_B=\mathcal Q(f,b,\widetilde V_B),
\qquad
c_B=f+\alpha_B(q_B-f).
\]
Each strength is selected on \(C\) to minimize mean trajectory RMSE.
For the convex, continuous trajectory-level error used in
Lemma~\ref{lem:strength-calibration}, define
\[
A_B(\alpha)
=
|C|^{-1}\sum_{i\in C}
e_i\!\left(f_i+\alpha(q_{B,i}-f_i)\right).
\]

\begin{proof}[Proof of Lemma~\ref{lem:strength-calibration}]
For each calibration trajectory,
\(f_i+\alpha(q_{B,i}-f_i)\) is affine in \(\alpha\). Since \(e_i\) is
convex and continuous in its forecast argument,
\(e_i(f_i+\alpha(q_{B,i}-f_i))\) is convex and continuous in \(\alpha\).
Averaging preserves both properties, so \(A_B\) is convex and continuous.
Continuity on the compact interval \([0,1]\) gives a minimizer. Since
\(0\) and \(1\) are feasible, its objective value cannot exceed either
endpoint value.
\end{proof}

The implementation uses bounded Brent search on \([0,1]\) with tolerance
\(10^{-8}\), followed by an explicit comparison with \(0\) and \(1\).
Ties go to the smaller strength. The \(U\) and \(M\) strengths are fitted
separately before the mixture weight.

\subsubsection{Degradation-aware combination and its guarantees}
\label{app:proof-weight-structure}
\label{app:proof-calibration-guarantees}
\label{app:theory}

Fix the calibrated branches and define
\[
p_i(w)=w c_{U,i}+(1-w)c_{M,i}.
\]
On the calibration set,
\[
L_C(w)
=
|C|^{-1}\sum_i e_i(p_i(w))
\]
is the mean trajectory error and
\[
G_C(w)
=
|C|^{-1}\sum_i
\max\{e_i(p_i(w))-e_i(f_i),0\}
\]
is the mean positive degradation relative to the free forecast.
Equal averaging defines the accuracy reference through
\[
F
=
\{w\in[0,1]:L_C(w)\leq L_C(1/2)\}.
\]
TCP minimizes \(G_C\) over this feasible set.

\begin{proof}[Proof of Lemma~\ref{lem:weight-structure}]
For each trajectory, \(p_i(w)\) is affine in \(w\). Since \(e_i\) is convex
and continuous in its forecast argument, \(e_i(p_i(w))\) is convex and
continuous in \(w\). Averaging preserves both properties, so \(L_C\) is
convex and continuous. Each term
\(\max\{e_i(p_i(w))-e_i(f_i),0\}\) is the maximum of two convex continuous
functions, so \(G_C\) is also convex and continuous. The set \(F\) is a
sublevel set of \(L_C\) containing \(1/2\), and is therefore a nonempty
closed interval. A continuous convex function attains its minimum on a
compact interval, and its minimizer set is a closed interval. Every closed
interval has a unique point closest to \(1/2\).
\end{proof}

Write the unrestricted degradation-minimizer interval as
\(\mathcal M=[m_1,m_2]\).
For an interval \(I\), let \(\pi_I(x)\) denote its point nearest to \(x\).

\begin{lemma}[Strict monotonicity outside the minimum interval]
\label{lem:strict}
\(G_C\) is strictly decreasing on \([0,m_1]\) and strictly increasing
on \([m_2,1]\).
\end{lemma}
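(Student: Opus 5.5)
The argument uses only the convexity and continuity of \(G_C\) on \([0,1]\), established in Lemma~\ref{lem:weight-structure}, together with the fact that \(\mathcal M=[m_1,m_2]\) is the full set of minimizers of \(G_C\) on \([0,1]\). Write \(G^\ast=\min_{[0,1]}G_C\). We treat the left piece \([0,m_1]\) in detail; the right piece \([m_2,1]\) follows by the reflection \(w\mapsto 1-w\), which preserves convexity and maps \(\mathcal M\) to \([1-m_2,1-m_1]\). If \(m_1=0\), the interval \([0,m_1]\) is a single point and the claim is vacuous, so assume \(m_1>0\).

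Take \(0\le x<y\le m_1\); we must show \(G_C(x)>G_C(y)\). First, every point of \([0,m_1)\) lies outside \(\mathcal M\), so \(G_C(x)>G^\ast=G_C(m_1)\). If \(y=m_1\), this is already the claim. Otherwise \(x<y<m_1\), and we write \(y=(1-t)x+t\,m_1\) with \(t=(y-x)/(m_1-x)\in(0,1)\). Convexity then gives
\[
G_C(y)\le(1-t)G_C(x)+tG_C(m_1).
\]
Because \(G_C(m_1)<G_C(x)\) and \(t>0\), the right-hand side is strictly smaller than \(G_C(x)\). Hence \(G_C(y)<G_C(x)\), so \(G_C\) is strictly decreasing on \([0,m_1]\).

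The only delicate point is strictness. It does not come from strict convexity, which \(G_C\) lacks since it is piecewise flat (it is identically zero wherever no trajectory degrades). Strictness comes instead from the definition of \(\mathcal M\) as the \emph{entire} set of minimizers, which forces \(G_C(x)>G^\ast\) for every \(x\notin\mathcal M\). Continuity is needed only to guarantee, as in Lemma~\ref{lem:weight-structure}, that the minimizer set is the closed interval \([m_1,m_2]\), so that \(G_C(m_1)=G^\ast\).
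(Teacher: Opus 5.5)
Your proof is correct and is essentially the paper's argument. For \(x<y<m_1\) you write \(y\) as a strict convex combination of \(x\) and \(m_1\) and use \(G_C(m_1)<G_C(x)\); you then handle \(y=m_1\) directly from \(\mathcal M\) being the full minimizer set and the increasing side by reflection, which are the cases the paper dismisses as following by the same argument. (Calling \(G_C\) ``piecewise flat'' overstates things, since it need only have flat stretches such as where it vanishes, but nothing in the proof depends on that remark.)
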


\begin{proof}
For \(x<y<m_1\), express \(y\) as a strict convex combination of
\(x\) and \(m_1\). Since \(G_C(m_1)<G_C(x)\), convexity gives
\(G_C(y)<G_C(x)\). The endpoint and increasing-side cases follow
by the same argument.
\end{proof}

\begin{proof}[Proof of Theorem~\ref{thm:shrinkage}]
Let \(w_D=\pi_{\mathcal M}(1/2)\). The feasible shrinkage segment from
\(1/2\) toward \(w_D\) ends at \(\pi_F(w_D)\).
If \(w_D\in F\), it minimizes degradation on \(F\) and is its
minimizer closest to \(1/2\). Otherwise, suppose \(w_D>1/2\), with
the other case symmetric. Then \(w_D=m_1\), \(F\) ends before
\(m_1\), and Lemma~\ref{lem:strict} makes its right endpoint the
unique constrained minimum. That endpoint is \(\pi_F(w_D)\).
This includes singleton \(F\), unrestricted \(F=[0,1]\), and
plateaus of \(G_C\), and proves the shrinkage and tie rules.
\end{proof}

\begin{proof}[Proof of Proposition~\ref{prop:calibration}]
Because \(w^\star\in F\),
\(L_C(w^\star)\leq L_C(1/2)\). Convexity of \(e_i\) gives
\(e_i(\frac12 c_{U,i}+\frac12 c_{M,i})
\leq
\frac12 e_i(c_{U,i})+\frac12 e_i(c_{M,i})\).
Averaging over \(C\) yields
\(L_C(1/2)\leq
\frac12[A_U(\alpha_U)+A_M(\alpha_M)]\).
Lemma~\ref{lem:strength-calibration} gives
\(A_U(\alpha_U)\leq\bar e_C(f)\) and
\(A_M(\alpha_M)\leq\bar e_C(f)\), which establishes the mean-error bound.
Since \(1/2\in F\) and \(w^\star\) minimizes \(G_C\) over \(F\),
\(G_C(w^\star)\leq G_C(1/2)\).
\end{proof}

These guarantees apply on the calibration set \(C\). If both strengths are
zero, TCP returns the free forecast and \(w^\star=1/2\).

The strength and weight stages are optimized sequentially because their joint
objective need not be convex. With \(f=0\), \(q_U=1\), \(c_M=0\), and truth
\(0.5\), the error \(|w\alpha_U-0.5|\) vanishes at
\((w,\alpha_U)=(1,0.5)\) and \((0.5,1)\), but is positive at their midpoint.

The accuracy constraint can also exclude a branch with smaller degradation.
With two one-point truths equal to zero,
\(f=c_M=(1,10)\), and \(c_U=(4,0)\),
\[
L_C(w)=\frac{11-7w}{2},
\qquad
F=[1/2,1],
\qquad
G_C(w)=\frac{3w}{2}.
\]
TCP selects \(w=1/2\), while \(w=0\) lies outside the accuracy-feasible set.

\begin{proposition}[Terminal value of the common correction]
\label{prop:gp}
The terminal mixture is
\begin{equation}
\label{eq:app-common-terminal}
p_H=[1-w\alpha_U-(1-w)\alpha_M]f_H
+w\alpha_U\widetilde V_U+(1-w)\alpha_M\widetilde V_M.
\end{equation}
It is a convex combination of the free terminal value and the supplied
endpoints when \(\alpha_U,\alpha_M,w\in[0,1]\).
\end{proposition}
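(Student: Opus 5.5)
The plan is to evaluate the returned TCP forecast at the terminal position \(j=H\) and then read off the convex-combination structure from the coefficients. Start from the final formula of Section~\ref{sec:weighting},
\(\widehat y_{\mathrm{TCP},j}=f_j+w\alpha_U(q_{U,j}-f_j)+(1-w)\alpha_M(q_{M,j}-f_j)\),
which is just \(p(w)=w c_U+(1-w)c_M\) with \(c_B=f+\alpha_B(q_B-f)\) expanded. Set \(j=H\).

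The one substantive input is the terminal value of the endpoint-guided operator: \(q_{B,H}=\widetilde V_B\) for \(B\in\{U,M\}\). This holds because \(q_B=\mathcal Q(f,b,\widetilde V_B)\) is a minimizer over \(\mathcal A_{b,\widetilde V_B}\), and every sequence in that set satisfies \(q_H=\widetilde V_B\) by definition. The appendix implementation confirms this: the terminal sample is assigned \(V\) after PAVA and clipping, and when \(V=b\) the output is the constant sequence \(b=V\). Feasibility, and hence existence of the minimizer, follows because \(\mathcal A_{b,V}\) is nonempty; for instance, the linear interpolation between \(b\) and \(V\) lies in it. I expect this step to be the only point needing care, since the whole identity rests on the terminal constraint being enforced exactly rather than only approximately.

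Substituting \(q_{U,H}=\widetilde V_U\) and \(q_{M,H}=\widetilde V_M\) gives
\(p_H=f_H+w\alpha_U(\widetilde V_U-f_H)+(1-w)\alpha_M(\widetilde V_M-f_H)\).
Collecting the \(f_H\) terms yields exactly \eqref{eq:app-common-terminal}.

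For the convexity claim, the three coefficients
\(\lambda_U=w\alpha_U\), \(\lambda_M=(1-w)\alpha_M\), and \(\lambda_0=1-\lambda_U-\lambda_M\) sum to one by construction. Both \(\lambda_U\) and \(\lambda_M\) are nonnegative because they are products of numbers in \([0,1]\). Finally, \(\lambda_U+\lambda_M\le w+(1-w)=1\) since \(\alpha_U,\alpha_M\le1\), so \(\lambda_0\ge0\). Hence \(p_H\) is a convex combination of \(f_H\), \(\widetilde V_U\), and \(\widetilde V_M\).
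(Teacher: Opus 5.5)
Your proof is correct and follows the paper's own argument: substitute \(q_{B,H}=\widetilde V_B\) (forced because every sequence in \(\mathcal A_{b,V}\) has \(q_H=V\)) into the mixture at \(j=H\), then check that the three coefficients are nonnegative and sum to one. Your extra details, namely nonemptiness of \(\mathcal A_{b,V}\) via linear interpolation and the bound \(w\alpha_U+(1-w)\alpha_M\le 1\), spell out steps that the paper's one-line proof leaves implicit.
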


\begin{proof}
Substitute \(q_{B,H}=\widetilde V_B\) into the mixture; the three
coefficients are nonnegative and sum to one.
\end{proof}

The terminal identity gives a direct interpretation of the final TCP
forecast. The projected endpoints determine the candidate terminal values,
the strengths control their influence, and \(w^\star\) balances the two
calibrated corrections.

At interior continuation positions, the bounded isotonic projection also
depends on the supplied endpoint, so a mixture of two corrected forecasts
generally differs from applying one correction at their effective terminal
value. For \(H=2\), \(b=0\), \(f=(2,0)\), and supplied endpoints one and
five, the two hard continuations are \((1,1)\) and \((4.5,5)\). Their equal
mixture is \((2.75,3)\), while
\(\mathcal Q(f,0,3)=(3,3)\).

\subsubsection{Computational Complexity}
\label{app:complexity}

Let \(N=|P_D|\) be the number of permitted source trajectories and
\(n=|D|\leq N\) the number retained by representative selection. Let
\(K\) denote the number of candidate source stages, \(H\) the forecasting
horizon, and \(c=|C|\) the number of calibration trajectories. Let
\(B_\kappa\), \(B_\alpha\), and \(B_w\) denote the numbers of objective
evaluations used by bandwidth search, strength calibration, and mixture
weight optimization, respectively.

\begin{proposition}[Computational complexity of TCP]
\label{prop:complexity}
For one target, after representative selection, TCP requires
\[
\Theta\!\left(n(n+1)K\right)
\]
stage-distance evaluations for matched-increment construction,
\(O(B_\kappa n^2)\) additional work for bandwidth selection, and
\(O(H)\) work for each endpoint-guided correction. Across \(c\)
calibration trajectories, strength and weight calibration require
\[
O\!\left((B_\alpha+B_w)cH\right).
\]
Hence, excluding the one-time representative-selection cost
\(T_{\mathrm{sel}}(N)\), the total complexity is
\[
T_{\mathrm{TCP}}
=
O\!\left(
n^2K
+
B_\kappa n^2
+
(B_\alpha+B_w)cH
+
H
\right).
\]
With fixed numerical search tolerances, this reduces to
\[
T_{\mathrm{TCP}}
=
O\!\left(n^2K+cH\right).
\]
Using all permitted sources instead gives
\[
T_{\mathrm{full}}
=
O\!\left(N^2K+cH\right),
\]
so representative selection reduces the dominant matching term by
\[
\frac{N^2K}{n^2K}
=
\left(\frac{N}{n}\right)^2.
\]
\end{proposition}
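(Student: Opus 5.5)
The plan is to count the work stage by stage for one target, then add the counts. The statement is essentially accounting over the implementation in Appendix~\ref{app:current-implementation}, so each term should follow from the per-operation costs fixed there.

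\emph{Matching term.} Matched-increment pooling compares the target's last five prefix observations with five-point windows of each retained source at each of the $K=m-4$ candidate stages. Each comparison costs $O(1)$, since the feature has constant length. The bandwidth fit treats each of the $n$ sources as a pseudo-target against the remaining sources, which adds source-to-source stage comparisons. Following the counting convention in Table~\ref{tab:appendix-metrics}, the total is $n$ target-to-source stage sets plus $n^2$ source-to-source stage sets, that is $n(n+1)K$ evaluations. Each of these is performed exactly once, which gives the $\Theta$ bound. I would state explicitly that these distances are precomputed and reused across bandwidth evaluations.

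\emph{Bandwidth search.} Given the cached distances, each evaluation of the leave-one-source-out MAE recomputes $n$ weighted averages of $n$ terms each, costing $O(n^2)$. Over $B_\kappa$ evaluations this gives $O(B_\kappa n^2)$.

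\emph{Correction operator.} For $\mathcal Q$, the residual bridge takes $O(H)$ time. PAVA on $H$ points also runs in $O(H)$ because each merge removes a block. Clipping the blocks to the bounds is then $O(H)$ as well.

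\emph{Calibration.} Each evaluation of $A_B$, $L_C$, or $G_C$ forms affine combinations of stored length-$H$ vectors and computes their RMSE for $c$ trajectories, so each costs $O(cH)$. This assumes $q_{B,i}$ and $f_i$ are computed once per calibration trajectory. Over $B_\alpha+B_w$ evaluations this gives $O((B_\alpha+B_w)cH)$.

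\emph{Totals.} Summing the four terms gives $T_{\mathrm{TCP}}$. With fixed search tolerances, the bounded Brent searches use a bounded number of iterations, so $B_\kappa$, $B_\alpha$, and $B_w$ are $O(1)$. The $B_\kappa n^2$ term is then absorbed into $n^2K$ since $K\ge1$, and the lone $H$ term is absorbed into $cH$ since $c\ge1$. Replacing $n$ by $N$ throughout gives $T_{\mathrm{full}}$. The ratio of the matching terms is $(N/n)^2$.

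The main obstacle is the precaching assumption in the first two steps. If distances or calibration corrections were recomputed inside each search evaluation, the bounds would pick up extra factors of $K$ or $B$. The first thing I would do is verify from the implementation description that the match distances $\delta_k$ do not depend on $\lambda$, so they can be computed once, and that the pseudo-target construction of $q_{B,i}$ does not depend on $\alpha$ or $w$, so it can be hoisted out of the search loops.
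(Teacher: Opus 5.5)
Your proposal is correct and follows the paper's derivation essentially step for step. The matching count is $nK$ target matches plus $n$ pseudo-target sweeps over the same $nK$ candidates, giving $n(n+1)K$; each bandwidth-objective evaluation costs $O(n^2)$ through an $n\times n$ weight matrix; the bridge and PAVA are linear in $H$; each strength or weight objective evaluation costs $O(cH)$; and replacing $n$ by $N$ gives the full-pool bound.

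The caching and absorption facts you make explicit are what the paper's derivation uses without stating: $\delta_k$ does not depend on $\lambda$, the $q_{B,i}$ are hoisted out of the searches, Brent search takes a bounded number of iterations, and $K\geq1$, $c\geq1$. One caveat applies equally to the paper. Hoisting the $q_{B,i}$ does not make them free: building the $c$ calibration endpoints costs on the order of $c\,n(n+1)K$ further stage distances, and both you and the paper leave this outside the per-target count.
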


\paragraph{Derivation.}
For matched-increment pooling, each of the \(n\) sources has \(K\)
candidate stages. Matching the target therefore requires \(nK\)
distance evaluations. Bandwidth calibration additionally treats each of
the \(n\) completed sources as a pseudo-target and compares it against
the same \(nK\) source-stage candidates, giving
\[
nK+n(nK)=n(n+1)K
\]
distance evaluations. Uniform endpoint pooling costs \(O(n)\) and is
therefore dominated by matching. Each evaluation of the source-only
bandwidth objective uses an \(n\times n\) weight matrix, giving
\(O(B_\kappa n^2)\).

For a supplied endpoint, constructing the residual bridge is \(O(H)\),
and bounded isotonic projection is \(O(H)\) using PAVA. Thus each hard
endpoint correction is linear in the forecast horizon. Evaluating a
strength or mixture objective over \(c\) calibration trajectories costs
\(O(cH)\), which gives
\(O(B_\alpha cH)\) for strength calibration and
\(O(B_w cH)\) for mixture weighting.

The representative-selection cost is incurred before endpoint
construction. Writing it separately as \(T_{\mathrm{sel}}(N)\) keeps the
bound independent of the particular Gaussian-kernel herding
implementation. In the current pipeline, the subsequent matching work is
the dominant source-dependent term and changes from quadratic dependence
on the full pool size \(N\) to quadratic dependence on the selected size
\(n\). This corresponds to the empirical reduction from
\(331{,}685.23\) to \(2{,}389.45\) candidate-stage distance elements per
packet for Full pool and TCP, respectively.

\subsection{Case studies}
\label{app:gallery}
\label{app:cases}

These six Michaelis--Menten examples illustrate the main ways TCP modifies a
saved target-only forecast through endpoint transfer, projection, strength
calibration, and branch weighting. The cases are selected by the frozen
post hoc rule described below and include upward and downward corrections,
a collapsed boundary endpoint, and attenuated branch strengths. Figure~\ref{fig:appendix-beneficial-cases}
shows the observed prefix, held-out continuation, free forecast, calibrated
\(U\) and \(M\) branches, and their TCP mixture. The displayed branches are
\(c_B=f+\alpha_B[\mathcal Q(f,b,\widetilde V_B)-f]\), and
\(\widehat y_{\mathrm{TCP}}=w^\star c_U+(1-w^\star)c_M\).
Triangles mark the projected endpoints supplied to the hard corrections,
the dotted line marks the observation boundary, and reverse cases forecast
toward smaller physical input indices. Each panel reports RMSE over the
held-out continuation and the frozen calibration coefficients shared by its
cohort, direction, forecaster, and fold.

\begin{figure}[H]
\centering
\includegraphics[width=\textwidth]{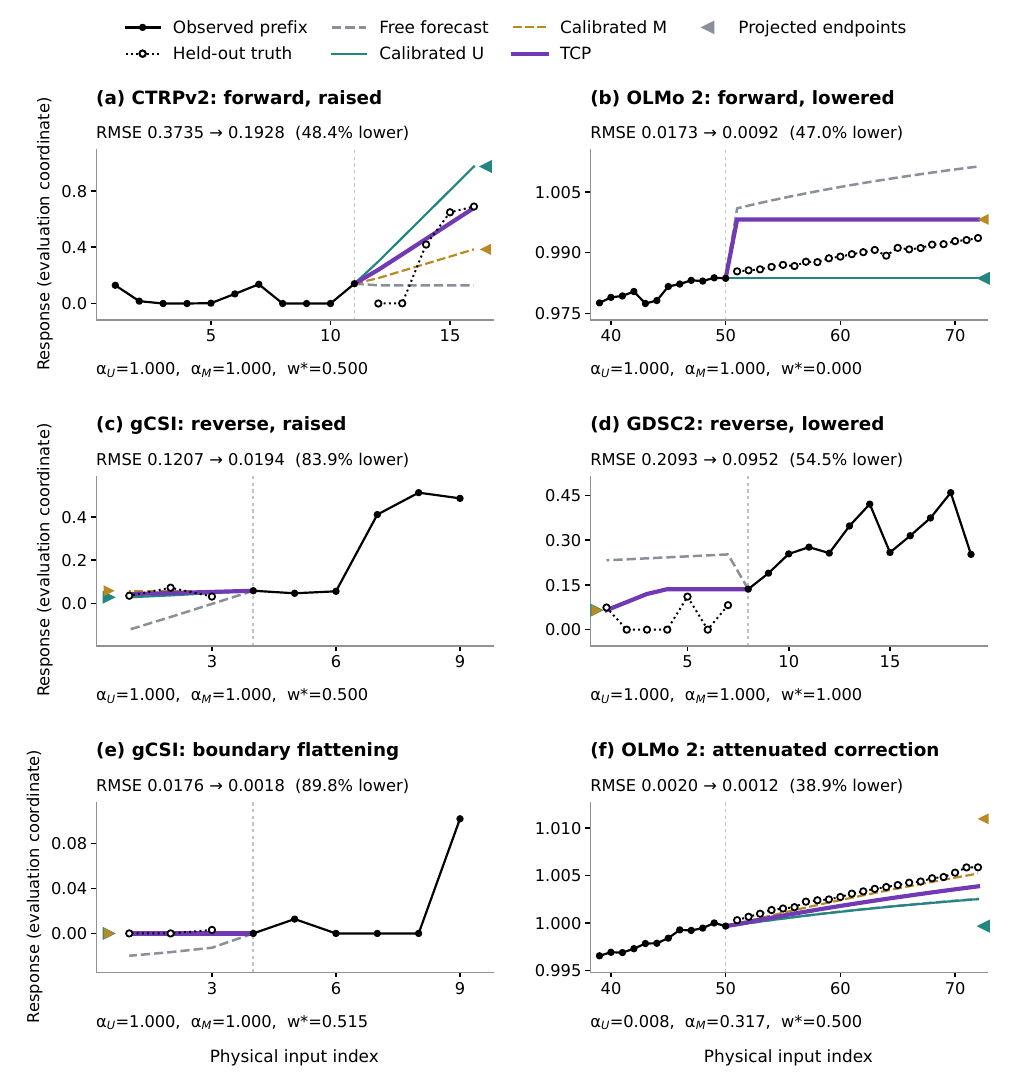}
\caption{Selected trajectory examples illustrating TCP correction mechanisms.}
\label{fig:appendix-beneficial-cases}
\label{fig:trajectories}
\end{figure}

\subsubsection{Case (a): Raising an Almost Flat Forward Forecast}

The CTRPv2 Simvastatin target
\texttt{061\_3\_0289\_011} has 11 observed and five held-out values.
Its free forecast remains near \(0.1292\), while the held-out endpoint is
\(0.6881\). The projected endpoints are
\(\widetilde V_U=0.9723\) and \(\widetilde V_M=0.3844\).
With \(\alpha_U=\alpha_M=1\) and \(w^\star=0.5\), TCP applies both
endpoint corrections fully and averages them, producing a terminal value of
\(0.6783\). The \(U\) branch supplies the stronger upward level correction,
while the \(M\) branch supplies a smaller remaining-change correction. Their
combination raises the nearly flat free forecast toward the observed
continuation. Full-tail RMSE falls from \(0.3735\) to \(0.1928\), a
\(48.4\%\) reduction, and interior-tail RMSE falls by \(30.6\%\).

\subsubsection{Case (b): Lowering a Forward Overshoot}

The OLMo~2 Dolma Stack validation target
\texttt{005\_2\_0008\_050} has 50 observed and 22 held-out values.
Its free forecast rises from \(1.0010\) to \(1.0113\), while the held-out
continuation ends at \(0.9936\). The projected endpoints are
\(\widetilde V_U=b=0.9837\) and
\(\widetilde V_M=0.9982\). Both branch strengths equal one, and
\(w^\star=0\) selects the \(M\) branch. The anchored correction therefore
lowers the free forecast to a constant continuation at \(0.9982\).
Full-tail RMSE falls from \(0.0173\) to \(0.0092\), a \(47.0\%\)
reduction, and interior-tail RMSE falls by \(46.0\%\).

\subsubsection{Case (c): Raising a Reverse Forecast That Falls Below the Data}

The gCSI vincristine target
\texttt{124\_0\_0351\_006} has six observed and three held-out values.
Forecasting proceeds toward smaller physical input indices. The free forecast
falls below zero and ends at \(-0.1203\), while the held-out endpoint is
\(0.0359\). The projected endpoints are
\(\widetilde V_U=0.0288\) and
\(\widetilde V_M=b=0.0589\). With unit strengths and
\(w^\star=0.5\), TCP averages a decreasing \(U\) continuation with the
constant \(M\) continuation and ends at \(0.0438\). TCP is therefore raised
relative to the free forecast while still following the decreasing response
direction from the observed boundary. Full-tail RMSE falls from \(0.1207\)
to \(0.0194\), an \(83.9\%\) reduction, and interior-tail RMSE falls by
\(76.5\%\).

\subsubsection{Case (d): Lowering a Reverse Forecast to the Boundary Range}

The GDSC2 target \texttt{143\_3\_0395\_014} from cohort 1190 has
14 observed and seven held-out values. Its free continuation starts at
\(0.2526\), above the observed boundary \(b=0.1359\), and remains above
the held-out continuation. The projected endpoints are
\(\widetilde V_U=0.0654\) and
\(\widetilde V_M=0.0644\). With unit strengths and \(w^\star=1\), TCP
equals the \(U\) branch. The bounded correction first keeps the continuation
within the observed boundary range and then decreases toward the \(U\)
endpoint, removing the free forecast's upward jump. Full-tail RMSE falls from
\(0.2093\) to \(0.0952\), a \(54.5\%\) reduction, and interior-tail RMSE
falls by \(52.6\%\).

\subsubsection{Case (e): Flattening When Both Endpoints Equal the Boundary}

The gCSI CAL101 target \texttt{118\_0\_0300\_006} has six observed and
three held-out values. Both projected endpoints equal \(b=0\). The bounded
monotone set therefore collapses to the constant zero continuation, making
both hard branches and both calibrated branches identical. TCP is consequently
zero for any mixture weight, including the saved \(w^\star=0.5154\). The
free forecast instead falls from \(-0.0127\) to \(-0.0199\), while the
held-out continuation remains near zero. Full-tail RMSE falls from
\(0.0176\) to \(0.0018\), an \(89.8\%\) reduction, and interior-tail RMSE
falls by \(86.4\%\). This case illustrates the effect of target-consistent
projection when both endpoint estimates collapse to the observed boundary.

\subsubsection{Case (f): Applying an Attenuated Upward Correction}

The OLMo~2 Dolma peS2o validation target
\texttt{003\_2\_0006\_050} has 50 observed and 22 held-out values.
Its free endpoint is \(1.0025\), while the held-out endpoint is \(1.0058\).
The projected \(U\) endpoint equals the boundary at \(0.9997\), while
\(\widetilde V_M=1.0109\). The calibrated strengths are
\(\alpha_U=0.00804\) and \(\alpha_M=0.31705\), with
\(w^\star=0.5\). The \(U\) branch therefore remains almost unchanged from
the free forecast, while the \(M\) branch moves partway toward its endpoint.
Their mixture ends at \(1.0038\), between the free and held-out endpoints.
This case shows how strength calibration attenuates the two endpoint
corrections independently before weighting. Full-tail RMSE falls from
\(0.002007\) to \(0.001226\), a \(38.9\%\) reduction, and interior-tail
RMSE falls by \(38.8\%\).

\subsection{Full experimental details}
\label{app:experimental-details}

\subsubsection{Data, folds, and coordinates}

The five datasets contain 73 cohorts. Drug-response curves are grouped by
compound, OLMo~2 evaluation curves by evaluation metric, and LCDB learning
curves by dataset. Reverse evaluation reverses the physical input order before
applying the same prefix rule
\(m=\max\{6,\lfloor0.70L\rfloor\}\), with at least two held-out observations.
The benchmark contains \(12{,}620\) forward and \(12{,}625\) reverse targets,
organized into 730 cohort--direction--fold units and \(34{,}160\) trajectory
packets, including \(8{,}915\) calibration and \(25{,}245\) evaluation
packets. Each trajectory identity receives the same outer fold in both
directions. Within a cohort, identities are ordered by the SHA-256 key formed
from \texttt{full-distance-v1-20260923|} and the target identifier, and rank
modulo five assigns the fold.

\subsubsection{Target-only forecasters}

The eight parametric forecasters use the functional forms cited in the main
paper with a common fitting protocol. Michaelis--Menten, exponential
saturation, Gompertz, logistic, Weibull, and log-logistic forms follow
\citet{drc2026manual}, power law follows \citet{domhan2015speeding}, and the
arctangent form uses the Cauchy-CDF shape \citep{nistcauchy}. Target-only LKGP
uses the model of \citet{lin2025scalable} with a constant configuration
covariate, and frozen NSL-PFN uses the released checkpoint of
\citet{lee2025nslpfn} with native scaling. TCP applies the same common
endpoint-guided operator to all ten saved free forecasts.

Classical fits use fixed starts and bounds, a \(20{,}000\)-evaluation budget,
and one \(200{,}000\)-evaluation rescue. Target-only LKGP uses float64, seed
20260922, Adam for 1,000 steps with learning rates \(0.05\), \(0.01\), and
\(0.002\), structured exact likelihood, and Cholesky prediction.

\subsubsection{Target-only and source-pooled base fitting}
\label{app:tkf}

The source-access comparison uses the same frozen sample of \(1{,}032\)
evaluation and 381 calibration trajectories across 40 case-folds from 20
cohorts, covering all five datasets and both directions. T reuses the
target-only free forecast, K fits the target prefix together with the selected
source set, and F uses all permitted sources. The endpoint pool is fixed across
the three regimes, while strengths and the mixture weight are recalibrated on
the same \(C\). The comparison therefore isolates how much source information
already enters through the base forecaster. The pooled parametric variants
share one fitted shape across trajectories. NSL-PFN is evaluated only in the target-only regime because its released
checkpoint operates on a single target context, while the K and F regimes
require the base forecaster itself to jointly use the target and source
trajectories. We therefore apply source-pooled fitting only to forecasters
with a defined pooled fitting procedure.

\begin{table}[H]
\centering\footnotesize\setlength{\tabcolsep}{4pt}
\caption{Matched subset of 1,032 evaluation targets (40 case-folds from 20
cohorts, covering all five domains and both directions), drawn by a
frozen outcome-independent rule.
Unweighted mean tail RMSE of the free forecast and of TCP, and TCP's gain
(\%), for base forecasters fitted on the target alone (T), on the
knee-selected pooling set (K) and on all permitted source trajectories
(F). Every TCP uses the same pooled endpoints, with strengths and weight
refitted on \(C\) within each regime.
NSL-PFN has no pooled variant (N/A). $^\ast$Mean over three seeds of the
iterative Gaussian-process fit (gain: mean of the per-seed gains,
\(1.65\pm0.08\)).}
\label{tab:tkf}
\begin{tabular*}{\textwidth}{@{\extracolsep{\fill}}lrrrrrrrrr@{}}
\toprule
& \multicolumn{3}{c}{T: target only} & \multicolumn{3}{c}{K: knee pooling set} & \multicolumn{3}{c}{F: all sources}\\
\cmidrule(lr){2-4}\cmidrule(lr){5-7}\cmidrule(l){8-10}
Forecaster & free & TCP & gain & free & TCP & gain & free & TCP & gain\\
\midrule
Michaelis--Menten & 0.296 & 0.193 & 34.6 & 0.295 & 0.194 & 34.4 & 0.295 & 0.194 & 34.3\\
exp.\ saturation & 0.290 & 0.193 & 33.5 & 0.290 & 0.193 & 33.4 & 0.293 & 0.194 & 33.9\\
Gompertz & 0.322 & 0.193 & 40.1 & 0.276 & 0.196 & 29.1 & 0.271 & 0.193 & 28.8\\
Weibull cure & 0.367 & 0.198 & 46.2 & 0.295 & 0.195 & 33.9 & 0.275 & 0.194 & 29.3\\
log--log cure & 0.344 & 0.197 & 42.8 & 0.291 & 0.198 & 32.0 & 0.276 & 0.196 & 29.1\\
arctangent & 0.290 & 0.193 & 33.4 & 0.287 & 0.193 & 32.6 & 0.288 & 0.193 & 33.0\\
logistic & 0.355 & 0.204 & 42.4 & 0.289 & 0.195 & 32.6 & 0.271 & 0.196 & 27.6\\
power law & 0.299 & 0.193 & 35.3 & 0.303 & 0.193 & 36.1 & 0.303 & 0.193 & 36.1\\
LKGP & 0.303 & 0.191 & 37.2 & 0.190 & 0.178 & 6.5 & 0.176$^\ast$ & 0.173$^\ast$ & 1.6$^\ast$\\
NSL-PFN & 0.299 & 0.193 & 35.6 & N/A & N/A & N/A & N/A & N/A & N/A\\
\bottomrule
\end{tabular*}
\end{table}

K's pooled LKGP uses deterministic exact solves. F uses iterative solves for
\(1{,}161\) fitted trajectories with seeds 20260922, 20261922, and 20262922,
and exact solves for the remaining 252 trajectories, giving \(3{,}735\) fits.
The common correction reuses these forecasts and calibrates each seed on the
same \(C\) keys. The three F gains are \(1.74\%\), \(1.62\%\), and
\(1.59\%\), with mean \(1.65\%\) and sample SD \(0.08\) points. 

\subsubsection{Endpoint-source intervention}
\label{app:endpoint-source-details}

The free arm \(F\) and own-cohort TCP arm \(B\) reuse the main comparison and
its frozen calibration policy. Each eligible alternative cohort supplies its
own \(U/M\) endpoints after excluding identities and content shared with the
target case-fold's evaluation and calibration units and applying the same
representative selection. The alternative endpoints are mapped into the fixed
evaluation coordinate and passed through the same common operator and frozen
\(B\) policy. Losses are computed for each alternative separately and then
averaged within target.

The intervention contains \(152{,}296\) target--alternative pairs for
\(23{,}486\) paired targets in 67 cohorts. The main analysis averages
per-cohort ratios equally. The additional target-weighted and
cohort-mean-loss estimands are reported separately, and
\(G=A+C_{\mathrm{src}}\) holds before rounding in every aggregation.

\begin{table}[H]
\centering\footnotesize\setlength{\tabcolsep}{3pt}
\caption{Endpoint-source intervention on the 23,486 targets with at least
one eligible alternative cohort. Cohort-balanced means over 67 cohorts
of the per-cohort ratios (\%): \(G=(L_F-L_B)/L_F\) is TCP's gain over the
free forecast, \(A=(L_F-L_P)/L_F\) the gain when the endpoints are pooled
from other cohorts with the same frozen strengths and weight, and
\(C=(L_P-L_B)/L_F=G-A\) the additional value of the target's own cohort.
Here \(C\) is measured in percentage points of gain over free.
Interior: tail RMSE without the terminal point.
All 152,296 alternative endpoint pairs use the fit-free common operator.}
\label{tab:endpoint-source}
\begin{tabular}{@{}lrrrrrr@{}}
\toprule
& \multicolumn{3}{c}{Tail} & \multicolumn{3}{c}{Interior tail}\\
\cmidrule(lr){2-4}\cmidrule(l){5-7}
Forecaster & \(G\) & \(A\) & \(C\) & \(G\) & \(A\) & \(C\)\\
\midrule
Michaelis--Menten & 36.5 & 21.0 & 15.5 & 33.9 & 16.7 & 17.2\\
exp.\ saturation & 35.9 & 19.4 & 16.5 & 33.0 & 14.7 & 18.3\\
Gompertz & 31.7 & 14.5 & 17.2 & 27.1 & 6.6 & 20.5\\
Weibull cure & 35.4 & 18.7 & 16.7 & 31.5 & 12.0 & 19.5\\
log--log cure & 34.0 & 17.0 & 17.0 & 29.9 & 9.9 & 20.0\\
arctangent & 35.7 & 18.9 & 16.9 & 32.8 & 14.3 & 18.6\\
logistic & 35.8 & 19.5 & 16.2 & 32.9 & 14.2 & 18.7\\
power law & 39.1 & 24.4 & 14.7 & 35.9 & 19.5 & 16.4\\
LKGP & 35.8 & 18.3 & 17.5 & 32.5 & 11.5 & 21.0\\
NSL-PFN & 39.2 & 23.0 & 16.2 & 36.2 & 18.2 & 18.0\\
\bottomrule
\end{tabular}
\end{table}

\begin{table}[H]
\centering\footnotesize\setlength{\tabcolsep}{3pt}
\caption{Endpoint-source decomposition of the tail loss (\%) under two
further estimands on the 23,486 paired targets: target-weighted (TW)
ratios of mean losses and ratios of cohort mean losses (CML). Full: TCP's
target-weighted gain on all 25,245 targets, which equals the pooled gain
of Table~\ref{tab:rq1-tcp}.}
\label{tab:endpoint-estimands}
\begin{tabular}{@{}lrrrrrrr@{}}
\toprule
& \multicolumn{3}{c}{TW} & \multicolumn{3}{c}{CML} & \\
\cmidrule(lr){2-4}\cmidrule(lr){5-7}
Forecaster & \(G\) & \(A\) & \(C\) & \(G\) & \(A\) & \(C\) & Full\\
\midrule
Michaelis--Menten & 37.0 & 21.5 & 15.5 & 36.8 & 22.6 & 14.2 & 36.7\\
exp.\ saturation & 36.1 & 20.3 & 15.8 & 36.3 & 21.3 & 14.9 & 35.8\\
Gompertz & 35.7 & 17.9 & 17.8 & 31.4 & 15.5 & 15.9 & 35.8\\
Weibull cure & 42.5 & 25.3 & 17.2 & 35.9 & 20.2 & 15.7 & 42.1\\
log--log cure & 39.5 & 22.0 & 17.5 & 34.2 & 18.4 & 15.8 & 39.3\\
arctangent & 36.1 & 20.3 & 15.8 & 36.0 & 21.0 & 15.0 & 35.8\\
logistic & 39.8 & 22.2 & 17.5 & 35.0 & 19.3 & 15.7 & 39.2\\
power law & 36.5 & 20.2 & 16.3 & 39.0 & 24.8 & 14.2 & 36.3\\
LKGP & 38.9 & 21.7 & 17.2 & 35.9 & 20.0 & 15.9 & 38.5\\
NSL-PFN & 36.9 & 19.3 & 17.6 & 38.4 & 22.8 & 15.6 & 36.6\\
\bottomrule
\end{tabular}
\end{table}

The realized-endpoint diagnostic \(O\) uses the saved direction-projected
terminal response together with the frozen \(B\) policy. It isolates the
effect of supplying the realized endpoint magnitude before forecast correction
and calibration, while the full and interior continuation errors measure how
that endpoint propagates through the common operator.

\subsubsection{Endpoint construction, expanded pools, and workload}
\label{app:endpoint-pool-cost}
\label{app:endpoint-variant-details}

The selected and expanded-pool arms use identical target prefixes, free
forecasts, evaluation keys, response coordinates, and calibration sets. TCP
constructs \(U\) and \(M\) from the representative set \(D\), while Full pool
constructs the same estimators from every permitted donor in
\(P_D=O\setminus C\). Both arms refit the two strengths and mixture weight on
the same \(C\), so the comparison measures the complete endpoint-construction
procedure under selected versus full donor pools.

Random-two draws two donor identities uniformly without replacement from the
same ordered \(P_D\). Its full variant uses unit strengths and equal weighting,
while its calibrated variant fits two strengths and the degradation-aware
weight on \(C\). All-endpoints applies the endpoint-guided correction to every
donor endpoint and averages the resulting donor-specific forecasts. Its
calibrated version fits one strength to the ensemble displacement from the
free forecast. Random-two uses seeds 20260922, 20261922, and 20262922, with
reported means and sample SD across the three seed-level statistics.

\begin{table}[H]
\centering\footnotesize\setlength{\tabcolsep}{3pt}
\caption{Pooled mean tail RMSE for the retained calibrated TCP/Full-pool and full/calibrated ensemble controls (lower is better). Random variants report mean plus/minus sample SD over the three seeds.}
\label{tab:endpoint-variants-rmse}
\begin{tabular}{lrrrrrr}
\toprule
& TCP & Full pool & Random full & Random cal & All full & All cal\\
\midrule
Michaelis--Menten & 0.246218 & 0.244073 & \(0.314195\pm0.000861\) & \(0.268289\pm0.000209\) & 0.285807 & 0.252563\\
exp. saturation & 0.246135 & 0.244021 & \(0.314181\pm0.000861\) & \(0.268219\pm0.000182\) & 0.285786 & 0.252479\\
Gompertz & 0.244201 & 0.242637 & \(0.312624\pm0.000855\) & \(0.267773\pm0.000752\) & 0.283956 & 0.251681\\
Weibull cure & 0.247682 & 0.245716 & \(0.313251\pm0.000803\) & \(0.279238\pm0.000761\) & 0.284375 & 0.259519\\
log--log cure & 0.245606 & 0.243876 & \(0.312322\pm0.000829\) & \(0.273917\pm0.000765\) & 0.283530 & 0.255825\\
arctangent & 0.246036 & 0.243981 & \(0.314150\pm0.000862\) & \(0.268088\pm0.000206\) & 0.285760 & 0.252427\\
logistic & 0.251761 & 0.249194 & \(0.317143\pm0.000734\) & \(0.284326\pm0.000806\) & 0.288132 & 0.263626\\
power law & 0.243961 & 0.241909 & \(0.311723\pm0.000834\) & \(0.267261\pm0.000272\) & 0.283306 & 0.250294\\
LKGP & 0.241975 & 0.239661 & \(0.310480\pm0.000875\) & \(0.265905\pm0.000475\) & 0.281920 & 0.247832\\
NSL-PFN & 0.243175 & 0.241145 & \(0.311084\pm0.000831\) & \(0.268955\pm0.000551\) & 0.282718 & 0.250358\\
\bottomrule
\end{tabular}

\end{table}

\begin{table}[H]
\centering\footnotesize\setlength{\tabcolsep}{3pt}
\caption{Total positive RMSE degradation over the shared free forecast for the same variants (lower is better). Random variants report mean plus/minus sample SD over the three seeds.}
\label{tab:endpoint-variants-degradation}
\begin{tabular}{lrrrrrr}
\toprule
& TCP & Full pool & Random full & Random cal & All full & All cal\\
\midrule
Michaelis--Menten & 533.838 & 539.080 & \(2275.654\pm17.496\) & \(524.620\pm8.502\) & 1917.077 & 651.147\\
exp. saturation & 535.267 & 540.403 & \(2278.717\pm17.492\) & \(525.262\pm9.428\) & 1918.047 & 653.004\\
Gompertz & 566.091 & 564.000 & \(2341.523\pm14.918\) & \(609.393\pm11.992\) & 1911.796 & 686.564\\
Weibull cure & 588.878 & 587.641 & \(2290.648\pm17.127\) & \(720.359\pm5.442\) & 1896.276 & 770.165\\
log--log cure & 585.241 & 591.866 & \(2318.720\pm17.298\) & \(669.517\pm10.496\) & 1912.822 & 735.323\\
arctangent & 533.539 & 538.838 & \(2280.384\pm17.874\) & \(524.853\pm9.547\) & 1919.371 & 654.375\\
logistic & 576.736 & 570.975 & \(2274.629\pm13.722\) & \(683.916\pm2.691\) & 1873.161 & 759.826\\
power law & 535.727 & 538.640 & \(2232.204\pm17.618\) & \(538.633\pm7.209\) & 1895.796 & 662.323\\
LKGP & 546.849 & 552.663 & \(2235.042\pm18.457\) & \(564.957\pm9.361\) & 1904.454 & 686.612\\
NSL-PFN & 540.905 & 547.829 & \(2196.477\pm17.307\) & \(570.529\pm9.190\) & 1872.774 & 689.333\\
\bottomrule
\end{tabular}

\end{table}

For the full-strength selected-versus-full comparison, both arms set
\(\alpha_U=\alpha_M=1\) and fit the same constrained degradation-aware weight
on \(C\). This isolates the effect of the selected donor pool before strength
attenuation.

\begin{table}[H]
\centering\footnotesize\setlength{\tabcolsep}{5pt}
\caption{Full-strength selected-pool and Full-pool mixtures use the same \(C\)-only weighting rule with both strengths fixed to one.}
\label{tab:endpoint-pool-full-strength}
\begin{tabular}{lrrrrr}
\toprule
Forecaster & TCP full RMSE & Full pool RMSE & Gain (\%) & TCP full degr. & Full pool degr. \\
\midrule
MM & 0.250283 & 0.246352 & 1.571 & 888.739 & 826.821 \\
Exp. sat. & 0.250244 & 0.246260 & 1.592 & 890.238 & 827.553 \\
Gompertz & 0.247835 & 0.243874 & 1.598 & 922.001 & 844.066 \\
Weibull cure & 0.248267 & 0.244250 & 1.618 & 913.652 & 837.991 \\
Log-log cure & 0.247362 & 0.243347 & 1.623 & 924.248 & 852.631 \\
Arctan & 0.250159 & 0.246159 & 1.599 & 888.823 & 825.463 \\
Logistic & 0.251393 & 0.247375 & 1.598 & 879.303 & 800.530 \\
Power law & 0.248481 & 0.244814 & 1.476 & 899.548 & 844.225 \\
LKGP & 0.246955 & 0.242847 & 1.664 & 897.067 & 840.447 \\
NSL-PFN & 0.248095 & 0.244345 & 1.512 & 888.535 & 837.884 \\
\bottomrule
\end{tabular}

\end{table}

\begin{table}[H]
\centering\footnotesize\setlength{\tabcolsep}{3pt}
\caption{Complete pooled endpoint-pool comparison in the fixed evaluation
coordinate. Gain is \(100(1-L_{\rm full}/L_{\rm TCP})\); excess reduction
is \(100(1-H_{\rm full}/H_{\rm TCP})\), so negative entries mean more
error or more total RMSE degradation for full. Total RMSE degradation and harmed counts use the
identical free forecast.}
\label{tab:endpoint-pool-full}
\label{tab:endpoint-pool-main}
\begin{tabular}{lrrrrrrrrr}
\toprule
& \multicolumn{3}{c}{Mean tail RMSE} & Full gain & \multicolumn{2}{c}{\shortstack{Total RMSE\\degradation \(H\)}} & Degrad. red. & \multicolumn{2}{c}{Harmed count}\\
\cmidrule(lr){2-4}\cmidrule(lr){6-7}\cmidrule(lr){9-10}
Forecaster & free & TCP & Full & (\%) & TCP & Full & (\%) & TCP & Full\\
\midrule
Michaelis--Menten & 0.388869 & 0.246218 & 0.244073 & 0.87 & 533.838 & 539.080 & -0.98 & 4938 & 4902\\
exp.\ saturation & 0.383382 & 0.246135 & 0.244021 & 0.86 & 535.267 & 540.403 & -0.96 & 5082 & 5085\\
Gompertz & 0.380124 & 0.244201 & 0.242637 & 0.64 & 566.091 & 564.000 & 0.37 & 7052 & 6856\\
Weibull cure & 0.427775 & 0.247682 & 0.245716 & 0.79 & 588.878 & 587.641 & 0.21 & 6672 & 6444\\
log--log cure & 0.404516 & 0.245606 & 0.243876 & 0.70 & 585.241 & 591.866 & -1.13 & 7321 & 7149\\
arctangent & 0.383079 & 0.246036 & 0.243981 & 0.84 & 533.539 & 538.838 & -0.99 & 5007 & 4952\\
logistic & 0.414167 & 0.251761 & 0.249194 & 1.02 & 576.736 & 570.975 & 1.00 & 6995 & 6808\\
power law & 0.383026 & 0.243961 & 0.241909 & 0.84 & 535.727 & 538.640 & -0.54 & 5775 & 5597\\
LKGP & 0.393775 & 0.241975 & 0.239661 & 0.96 & 546.849 & 552.663 & -1.06 & 4900 & 4871\\
NSL-PFN & 0.383417 & 0.243175 & 0.241145 & 0.83 & 540.905 & 547.829 & -1.28 & 6519 & 6419\\
\bottomrule
\end{tabular}

\end{table}

\begin{table}[H]
\centering\footnotesize\setlength{\tabcolsep}{3pt}
\caption{All 100 domain--direction gains from expanding the endpoint pool,
relative to TCP. Positive means lower mean tail RMSE for full;
three negative entries favor the selected pool.}
\label{tab:endpoint-pool-cells}
\begin{tabular}{lrrrrrrrrrr}
\toprule
& \multicolumn{5}{c}{Forward gain (\%)} & \multicolumn{5}{c}{Reverse gain (\%)}\\
\cmidrule(lr){2-6}\cmidrule(l){7-11}
Forecaster & OLMo~2 & LCDB & gCSI & CTRP & GDSC & OLMo~2 & LCDB & gCSI & CTRP & GDSC\\
\midrule
Michaelis--Menten & 8.30 & 7.17 & 0.98 & 0.44 & 0.88 & 2.57 & 6.71 & 1.27 & 1.05 & 0.24\\
exp.\ saturation & 4.81 & 5.69 & 0.86 & 0.42 & 0.92 & 2.58 & 6.66 & 1.55 & 1.38 & 0.66\\
Gompertz & 4.35 & 2.04 & 0.28 & 0.25 & 1.37 & 2.37 & 6.68 & 1.36 & 0.81 & 0.28\\
Weibull cure & 4.77 & -0.63 & 0.54 & 0.33 & 1.71 & 2.36 & 0.96 & 1.22 & 1.55 & 0.63\\
log--log cure & 5.49 & 0.17 & 0.18 & 0.27 & 2.01 & 2.38 & 0.95 & 1.19 & 1.64 & 0.17\\
arctangent & 6.79 & 4.58 & 0.91 & 0.38 & 0.86 & 2.62 & 6.59 & 1.47 & 1.47 & 0.34\\
logistic & 4.37 & 2.09 & 0.66 & 0.56 & 1.84 & 2.68 & 7.14 & 2.02 & 1.31 & 1.06\\
power law & 8.89 & 3.36 & 0.82 & 0.47 & 1.03 & 2.57 & 6.52 & 1.27 & 0.72 & 0.04\\
LKGP & 7.78 & 2.35 & 0.98 & 0.62 & 1.10 & 2.58 & 4.24 & 1.65 & 0.21 & 0.58\\
NSL-PFN & 6.93 & 3.92 & 0.73 & 0.59 & 1.14 & 2.50 & 6.85 & 1.11 & -0.22 & -0.26\\
\bottomrule
\end{tabular}

\end{table}

\begin{table}[H]
\centering\footnotesize\setlength{\tabcolsep}{4pt}
\caption{Adverse cell counts for full relative to TCP and the worst
cohort--direction gain. Counts cover ten domain--direction and 146
cohort--direction cells; every cohort is retained in the data ledger.}
\label{tab:endpoint-pool-worst}
\begin{tabular}{lrrrlr}
\toprule
& \multicolumn{2}{c}{Domain cells with more} & Cohort cells & Worst cohort & Gain\\
\cmidrule(lr){2-3}
Forecaster & error & degradation & with more error & and direction & (\%)\\
\midrule
Michaelis--Menten & 0 & 2 & 25 & lcdb\_005|fwd & -17.65\\
exp.\ saturation & 0 & 2 & 28 & olmo\_027|fwd & -47.07\\
Gompertz & 0 & 3 & 38 & olmo\_027|fwd & -48.21\\
Weibull cure & 1 & 2 & 39 & olmo\_027|fwd & -42.58\\
log--log cure & 0 & 3 & 45 & lcdb\_008|bwd & -30.52\\
arctangent & 0 & 2 & 29 & lcdb\_005|fwd & -7.61\\
logistic & 0 & 2 & 32 & olmo\_027|fwd & -48.41\\
power law & 0 & 3 & 33 & lcdb\_005|fwd & -13.72\\
LKGP & 0 & 4 & 32 & lcdb\_002|bwd & -12.15\\
NSL-PFN & 2 & 3 & 31 & olmo\_027|fwd & -13.48\\
\bottomrule
\end{tabular}

\end{table}

Full pool lowers pooled mean RMSE for all ten forecasters by
\(0.64\%\) to \(1.02\%\) relative to TCP. The selected pool is better in
three of the 100 dataset--direction comparisons, while total degradation
changes by \(-1.28\%\) to \(1.00\%\). These results show that representative
selection preserves most of the endpoint information available from the full
pool.

We quantify the work used to construct those endpoint estimates on the same
\(C+E\) packet keys. For \(n\) donors and \(K=m-4\) candidate stages,
matched-increment construction evaluates \(n(n+1)K\) candidate-stage
distances, once for the target and once for each donor pseudo-target. Across
all \(34{,}160\) packets, TCP uses \(81{,}623{,}538\) distance elements,
or \(2{,}389.45\) per packet, while Full pool uses
\(11{,}330{,}367{,}552\), or \(331{,}685.23\) per packet. Full pool
therefore uses \(138.8\times\) more matching-distance elements. TCP uses
693,194 donor--packet inputs, or 20.29 per packet, compared with
8,774,101 and 256.85 for Full pool, a \(12.7\times\) difference. These
counts describe donor construction and matching work directly.

\begin{table}[H]
\centering\footnotesize\setlength{\tabcolsep}{4pt}
\caption{Donor inputs consumed after selection across the 34,160 calibration/evaluation packets. TCP/Full-pool use original donor curves for U/M construction; the ensembles use selected donor endpoints, per seed for Random-two.}
\label{tab:endpoint-construction-inputs}
\begin{tabular}{lrrrr}
\toprule
Bank & C donor inputs & E donor inputs & C+E donor inputs & Mean C+E\\
\midrule
TCP & 167,438 & 525,756 & 693,194 & 20.293\\
Full pool & 2,071,471 & 6,702,630 & 8,774,101 & 256.853\\
Random-two & 17,830 & 50,490 & 68,320 & 2.000\\
All-endpoints & 2,071,471 & 6,702,630 & 8,774,101 & 256.853\\
\bottomrule
\end{tabular}

\end{table}

\begin{table}[H]
\centering\footnotesize\setlength{\tabcolsep}{3pt}
\caption{Exact candidate-stage distance elements in N1 M matching across the same calibration/evaluation packets. Zero for an ensemble means that matching is not used, not that all computation is absent.}
\label{tab:endpoint-matching-work}
\begin{tabular}{lrrrr}
\toprule
Bank & C distance elements & E distance elements & C+E total & Mean C+E\\
\midrule
TCP & 20,045,722 & 61,577,816 & 81,623,538 & 2389.448\\
Full pool & 2,663,962,946 & 8,666,404,606 & 11,330,367,552 & 331685.233\\
Random-two & 0 & 0 & 0 & 0.000 (not used)\\
All-endpoints & 0 & 0 & 0 & 0.000 (not used)\\
\bottomrule
\end{tabular}

\end{table}

A logical response term is one endpoint-corrected forecast per forecaster and
run. TCP, Full pool, and Random-two each construct two response terms per
packet. All-endpoints constructs one term per donor, averaging 256.85 terms
over \(C+E\). The response count is reported separately from donor inputs and
matching distances because these quantities describe different stages of the
endpoint-construction pipeline.

\begin{table}[H]
\centering\footnotesize\setlength{\tabcolsep}{4pt}
\caption{Exact logical endpoint response terms per base forecaster and run, before cache sharing. Full/cal variants share each Random-two or All-endpoints bank; the displayed means use the stated packet denominators.}
\label{tab:endpoint-response-counts}
\begin{tabular}{lrrrrrr}
\toprule
Bank & C terms & E terms & C+E terms & Mean C & Mean E & Mean C+E\\
\midrule
TCP & 17,830 & 50,490 & 68,320 & 2.000 & 2.000 & 2.000\\
Full pool & 17,830 & 50,490 & 68,320 & 2.000 & 2.000 & 2.000\\
Random-two & 17,830 & 50,490 & 68,320 & 2.000 & 2.000 & 2.000\\
All-endpoints & 2,071,471 & 6,702,630 & 8,774,101 & 232.358 & 265.503 & 256.853\\
\bottomrule
\end{tabular}

\end{table}

\subsubsection{Direction-constraint diagnostic}
\label{app:direction-constraint-details}

The diagnostic partitions targets by the realized endpoint condition
\(\sigma(y(T)-b)\geq0\), with \(\sigma=+1\) for forward and \(-1\) for
reverse evaluation. The condition holds for \(20{,}043\) targets and fails
for \(5{,}202\), including \(3{,}932\) exact-zero cases. Projection is active
when either raw \(U\) or raw \(M\) violates the declared direction. This gives
\(5{,}816\) holds/active, \(14{,}227\) holds/inactive, \(3{,}794\)
fails/active, and \(1{,}408\) fails/inactive targets. The same masks are used
for all ten forecasters.

The no-direction arm removes the final endpoint projection and then
recalibrates both branch strengths and the mixture weight on \(C\). When the
realized endpoint satisfies the declared direction, the projection lemma
applies directly. For
\(t=\sigma(y(T)-b)\geq0\) and
\(x=\sigma(\widehat V-b)<0\), projection changes the absolute endpoint error
from \(t-x\) to \(t\) and reduces squared error by
\(x^2-2tx\geq0\). The saved-array audit verifies these endpoint inequalities
throughout the condition-holding population.

The forecast-level results follow the same pattern. On condition-holding
targets, TCP has lower mean RMSE than the no-direction arm for all ten
forecasters, and the largest gains occur when projection is active. When the
condition fails, the no-direction arm has lower mean RMSE for all ten.
Across the complete benchmark, TCP has lower mean RMSE for eight forecasters.
The degradation comparison is mixed because projection changes both the
endpoint and the subsequently recalibrated correction.

\begin{table}[H]
\centering\footnotesize\setlength{\tabcolsep}{3pt}
\caption{Signed percentage change in mean tail RMSE after removing projection and recalibrating, relative to TCP. Positive favors TCP; all primary and activity groups are retained.}
\label{tab:direction-rmse-changes}
\begin{tabular}{lrrrrrrr}
\toprule
& All & Holds & Fails & Holds/active & Holds/inactive & Fails/active & Fails/inactive\\
\midrule
Michaelis--Menten & +0.0509 & +0.6353 & -5.6695 & +10.7668 & -0.0014 & -10.0895 & -1.1765\\
exp. saturation & +0.0330 & +0.6247 & -5.7334 & +10.5737 & +0.0008 & -10.2599 & -1.1402\\
Gompertz & -0.0632 & +0.4467 & -4.3230 & +7.4438 & +0.0301 & -8.8601 & +0.3162\\
Weibull cure & -0.0360 & +0.5261 & -4.5663 & +7.2954 & +0.1101 & -9.2418 & +0.4581\\
log--log cure & +0.0096 & +0.5231 & -4.2053 & +7.5345 & +0.0963 & -8.5958 & +0.4546\\
arctangent & +0.0307 & +0.6163 & -5.6714 & +10.3765 & +0.0064 & -10.3415 & -0.9078\\
logistic & +0.0779 & +0.5690 & -3.9556 & +8.3565 & +0.0933 & -7.9774 & +0.2459\\
power law & +0.2549 & +0.7922 & -5.2388 & +12.4413 & +0.0330 & -8.6030 & -1.8112\\
LKGP & +0.6894 & +1.0817 & -3.3162 & +17.5443 & +0.0151 & -4.6629 & -1.9388\\
NSL-PFN & +0.9950 & +1.2897 & -2.1075 & +19.2872 & +0.0995 & -1.0227 & -3.1445\\
\bottomrule
\end{tabular}

\end{table}

\begin{table}[H]
\centering\footnotesize\setlength{\tabcolsep}{3pt}
\caption{No-direction total RMSE degradation minus TCP total degradation within each group. Positive favors TCP; improvements and regressions both remain visible.}
\label{tab:direction-degradation-changes}
\begin{tabular}{lrrrrrrr}
\toprule
& All & Holds & Fails & Holds/active & Holds/inactive & Fails/active & Fails/inactive\\
\midrule
Michaelis--Menten & -16.586 & -4.648 & -11.938 & +5.304 & -9.952 & -8.105 & -3.833\\
exp. saturation & -13.994 & -2.763 & -11.231 & +6.563 & -9.326 & -7.563 & -3.668\\
Gompertz & +2.204 & +5.516 & -3.312 & +11.677 & -6.161 & -1.380 & -1.932\\
Weibull cure & +6.398 & +10.041 & -3.643 & +13.184 & -3.143 & -2.290 & -1.353\\
log--log cure & +3.558 & +7.895 & -4.338 & +12.453 & -4.558 & -3.218 & -1.120\\
arctangent & -14.034 & -3.142 & -10.892 & +5.812 & -8.954 & -7.600 & -3.292\\
logistic & +5.602 & +8.519 & -2.918 & +13.868 & -5.349 & -1.426 & -1.491\\
power law & -19.057 & -2.569 & -16.488 & +7.738 & -10.307 & -11.811 & -4.677\\
LKGP & -25.780 & -2.909 & -22.870 & +3.841 & -6.750 & -18.296 & -4.575\\
NSL-PFN & -28.716 & -4.411 & -24.305 & +4.681 & -9.091 & -16.619 & -7.687\\
\bottomrule
\end{tabular}

\end{table}

\subsubsection{Calibration and weighting controls}
\label{app:component-values}

The component controls follow the two calibration stages in the main method.
Hard \(U\) and hard \(M\) apply their endpoint-guided corrections at full
strength. U-only and M-only use the corresponding calibrated branch.
Full-strength TCP fixes \(\alpha_U=\alpha_M=1\) and retains the
degradation-aware mixture rule. Shared strength fits one strength to the
equal-weight hard mixture. Equal weighting, RMSE-only weighting, and
degradation-only weighting keep the calibrated \(U/M\) branches fixed and
change only the mixture objective.

The no-donor SELF control projects the free terminal value in the declared
direction, applies the common operator, and calibrates one strength on the same
\(C\). It measures the gain from target-side endpoint correction without
transferred donor information.

Calibration policies are fitted once per case-fold and forecaster and then
shared by all evaluation targets in that unit. Across \(25{,}245\) evaluation
packets per forecaster, \(\alpha_U=0\) applies to 439--790 targets
(\(1.74\%\) to \(3.13\%\)), while both strengths are zero for 172--423
targets (\(0.68\%\) to \(1.68\%\)).

\begin{table}[H]
\centering\footnotesize\setlength{\tabcolsep}{3pt}
\caption{Exact values for Figure~\ref{fig:calibration-weighting} and the calibrated branch
comparisons of RQ2, from the target-only
comparison of Table~\ref{tab:rq1-tcp}; U and M denote uniform endpoint
pooling and matched-increment pooling. Each variant is compared with TCP
within the same base forecaster (10 forecasters, 100 cells). Gain: TCP's
pooled RMSE gain over the variant, minimum and maximum over forecasters
(positive favors TCP). Better and Degrad.: forecasters where TCP has lower
RMSE and lower total RMSE degradation. Neg.: the variant's cells worse than the free
forecast. Worst: the variant's worst cell gain (\%) against the free
forecast (TCP: no negative cells). $^\dagger$One
strength on the equal-weight hard forecast, which changes both the
strengths and the weight.}
\label{tab:rq2-components}
\begin{tabular}{@{}lrrrrrr@{}}
\toprule
& & \multicolumn{2}{c}{Gain (\%)} & & & \\
\cmidrule(lr){3-4}
Variant & Better & min & max & Degrad. & Neg. & Worst\\
\midrule
hard U & 10/10 & 12.9 & 14.1 & 10/10 & 27 & \(-80.9\)\\
hard M & 10/10 & 1.8 & 4.2 & 10/10 & 2 & \(-5.6\)\\
full strength & 9/10 & \(-\)0.15 & 2.0 & 10/10 & 8 & \(-16.7\)\\
U only & 10/10 & 2.7 & 4.5 & 10/10 & 3 & \(-6.3\)\\
M only & 10/10 & 1.7 & 2.4 & 3/10 & 0 & \(5.8\)\\
shared strength$^\dagger$ & 10/10 & 0.54 & 1.3 & 10/10 & 1 & \(-0.6\)\\
equal weight & 10/10 & 0.39 & 0.98 & 10/10 & 0 & \(4.3\)\\
\midrule
degradation only & 10/10 & 1.2 & 1.8 & 0/10 & 0 & \(2.7\)\\
RMSE only & 0/10 & \(-\)0.66 & \(-\)0.13 & 10/10 & 0 & \(4.1\)\\
\bottomrule
\end{tabular}
\end{table}

\begin{table}[H]\centering\footnotesize\setlength{\tabcolsep}{4pt}
\caption{Strength-control variants: TCP RMSE gain (\%) and total RMSE degradation reduction (\%) over each variant. Positive favors TCP. Both are zero for the TCP reference. }
\label{tab:component-full}
\begin{tabular}{@{}lrrrrrrrr@{}}\toprule
& \multicolumn{2}{c}{hard U} & \multicolumn{2}{c}{hard M} & \multicolumn{2}{c}{full strength} & \multicolumn{2}{c}{shared$^\dagger$}\\
Forecaster & gain & degrad. & gain & degrad. & gain & degrad. & gain & degrad.\\\midrule
Michaelis--Menten & 14.0 & 71.6 & 3.6 & 34.0 & 1.6 & 39.9 & 0.56 & 15.5\\
exp.\ saturation & 14.0 & 71.5 & 3.6 & 34.0 & 1.6 & 39.9 & 0.54 & 15.5\\
Gompertz & 14.1 & 70.5 & 3.6 & 34.2 & 1.5 & 38.6 & 0.57 & 13.6\\
Weibull cure & 13.1 & 68.8 & 2.3 & 31.0 & 0.24 & 35.5 & 1.3 & 19.9\\
log--log cure & 13.5 & 69.4 & 2.9 & 32.2 & 0.71 & 36.7 & 1.0 & 16.9\\
arctangent & 14.1 & 71.6 & 3.6 & 34.2 & 1.6 & 40.0 & 0.57 & 15.5\\
logistic & 12.9 & 69.1 & 1.8 & 29.5 & \(-\)0.15 & 34.4 & 1.1 & 18.8\\
power law & 13.9 & 70.9 & 4.0 & 34.1 & 1.8 & 40.4 & 0.60 & 15.2\\
LKGP & 14.1 & 70.5 & 4.2 & 32.2 & 2.0 & 39.0 & 0.61 & 15.6\\
NSL-PFN & 14.0 & 70.3 & 4.1 & 32.2 & 2.0 & 39.1 & 0.86 & 17.1\\
\bottomrule\end{tabular}\end{table}

\begin{table}[H]\centering\footnotesize\setlength{\tabcolsep}{4pt}
\caption{Calibrated branch variants: TCP RMSE gain (\%) and total RMSE degradation reduction (\%) over each variant. Positive favors TCP. Both are zero for the TCP reference. }
\begin{tabular}{@{}lrrrr@{}}\toprule
& \multicolumn{2}{c}{U only} & \multicolumn{2}{c}{M only}\\
Forecaster & gain & degrad. & gain & degrad.\\\midrule
Michaelis--Menten & 2.9 & 17.2 & 2.0 & \(-\)0.85\\
exp.\ saturation & 2.9 & 17.3 & 2.0 & \(-\)0.61\\
Gompertz & 2.7 & 17.4 & 2.3 & 1.6\\
Weibull cure & 4.5 & 22.0 & 2.2 & 2.6\\
log--log cure & 3.8 & 19.5 & 2.3 & 2.9\\
arctangent & 2.9 & 17.4 & 2.0 & \(-\)0.61\\
logistic & 4.5 & 22.5 & 1.7 & \(-\)0.56\\
power law & 3.1 & 17.3 & 2.2 & \(-\)3.2\\
LKGP & 3.0 & 18.9 & 2.4 & \(-\)4.8\\
NSL-PFN & 3.6 & 20.5 & 2.2 & \(-\)5.6\\
\bottomrule\end{tabular}\end{table}

\begin{table}[H]\centering\footnotesize\setlength{\tabcolsep}{4pt}
\caption{Weight-objective variants: TCP RMSE gain (\%) and total RMSE degradation reduction (\%) over each variant. Positive favors TCP. Both are zero for the TCP reference. }
\begin{tabular}{@{}lrrrrrr@{}}\toprule
& \multicolumn{2}{c}{equal} & \multicolumn{2}{c}{RMSE only} & \multicolumn{2}{c}{degradation only}\\
Forecaster & gain & degrad. & gain & degrad. & gain & degrad.\\\midrule
Michaelis--Menten & 0.40 & 2.5 & \(-\)0.65 & 10.8 & 1.8 & \(-\)9.9\\
exp.\ saturation & 0.39 & 2.6 & \(-\)0.66 & 10.8 & 1.8 & \(-\)9.9\\
Gompertz & 0.49 & 1.8 & \(-\)0.28 & 10.6 & 1.6 & \(-\)6.4\\
Weibull cure & 0.98 & 5.5 & \(-\)0.13 & 10.3 & 1.5 & \(-\)8.2\\
log--log cure & 0.78 & 4.0 & \(-\)0.22 & 11.2 & 1.6 & \(-\)8.0\\
arctangent & 0.41 & 2.7 & \(-\)0.63 & 10.9 & 1.8 & \(-\)9.9\\
logistic & 0.86 & 4.2 & \(-\)0.19 & 9.0 & 1.2 & \(-\)10.3\\
power law & 0.48 & 2.3 & \(-\)0.63 & 10.1 & 1.8 & \(-\)9.9\\
LKGP & 0.57 & 3.2 & \(-\)0.64 & 8.9 & 1.8 & \(-\)8.7\\
NSL-PFN & 0.79 & 4.1 & \(-\)0.58 & 8.7 & 1.7 & \(-\)8.9\\
\bottomrule\end{tabular}\end{table}

The controls expose the accuracy--degradation trade-off directly. TCP has
lower mean RMSE than U-only and M-only for all ten forecasters, while M-only
has lower total degradation for seven. Full-strength TCP has higher
degradation than calibrated TCP for every forecaster, with calibrated TCP
reducing degradation by \(34.4\%\) to \(40.4\%\). RMSE-only weighting lowers
mean RMSE relative to TCP for all ten forecasters, while TCP lowers total
degradation by \(8.7\%\) to \(11.2\%\). Degradation-only weighting moves in
the opposite direction, lowering degradation while increasing mean RMSE.
These comparisons match the two-stage method design in which branch strengths
first control correction magnitude and the constrained mixture then balances
mean error against positive degradation.

\end{document}